\newtheorem{thm} {Theorem}
\newtheorem{cor} {Corollary}
\newcommand{\Lb}{\left[}
\newcommand{\Rb}{\right]}
\newcommand{\lb}{\left(}
\newcommand{\rb}{\right)}
\newcommand{\trace}{\textnormal{trace}}
\newcommand{\diag}{\textnormal{diag}}
\newcommand{\cov}{\textnormal{cov}}
\newcommand{\bone}{\mathbf{1}}
\newcommand{\bx}{\mathbf{x}}
\newcommand{\by}{\mathbf{y}}
\newcommand{\bybar}{\overline{\mathbf{y}}}
\newcommand{\bs}{\mathbf{s}}
\newcommand{\bsigma}{\boldsymbol{\sigma}}
\newcommand{\bv}{\mathbf{v}}
\newcommand{\be}{\mathbf{e}}
\newcommand{\bxt}{\widetilde{\mathbf{x}}}
\newcommand{\bxhat}{\widehat{{\mathbf{x}}}}
\newcommand{\bH}{\mathbf{H}}
\newcommand{\bQ}{\mathbf{Q}}
\newcommand{\bI}{\mathbf{I}}
\newcommand{\bSigma}{\mathbf{\Sigma}}
\newcommand{\bV}{\mathbf{V}}
\newcommand{\bW}{\mathbf{W}}
\newcommand{\bL}{\mathbf{L}}
\newcommand{\bR}{\mathbf{R}}
\newcommand{\bS}{\mathbf{S}}
\newcommand{\cA}{\mathcal{A}}
\newcommand{\cN}{\mathcal{N}}
\newcommand{\cG}{\mathcal{G}}
\newcommand{\cV}{\mathcal{V}}
\newcommand{\cE}{\mathcal{E}}
\newcommand{\bbR}{\mathbb{R}}
\newcommand{\bbE}{\mathbb{E}}
\begin{document}

\title{Bias-Variance Tradeoff of Graph Laplacian Regularizer
	}

\author{Pin-Yu~Chen and Sijia Liu
	\thanks{P.-Y. Chen is with AI Foudations, IBM Thomas J. Watson Research Center, Yorktown Heights, NY 10598, USA. Email : pin-yu.chen@ibm.com.
		S. Liu is with the Department of Electrical Engineering and Computer Science, University of Michigan, Ann Arbor, MI 48109, USA. Email : lsjxjtu@umich.edu.}
}

\maketitle
\thispagestyle{empty}
\begin{abstract}
This paper presents a bias-variance tradeoff of graph Laplacian regularizer, which is widely used in graph signal processing and semi-supervised learning tasks. The scaling law of the optimal regularization parameter is specified in terms of the spectral graph properties and a novel signal-to-noise ratio parameter, which suggests selecting a mediocre regularization parameter is often suboptimal. The analysis is applied to three applications, including random, band-limited, and multiple-sampled graph signals. Experiments on synthetic and real-world graphs demonstrate near-optimal performance of the established analysis.
\end{abstract}

\begin{IEEEkeywords}
	graph signal processing, mean squared error analysis, scaling law, spectral graph theory
\end{IEEEkeywords}

\section{Introduction}

Graph Laplacian regularizer (GLR) has been widely used in graph signal processing, semi-supervised learning and image filtering tasks \cite{Shuman13,Bertrand13Mag,milanfar2013tour,Sandryhaila14,wu2016estimating}. Regularization techniques involving the graph Laplacian method can be interpreted in different perspectives. In a regression setting, GLR penalizes incoherent (i.e., non-smooth) signals across adjacent nodes \cite{belkin2004regularization,belkin2004semi,belkin2006manifold,Anis16,chepuri2016learning,Onuki16}. In a probability model setting, GLR is used as a prior distribution that favors smooth signals \cite{Shuman13,CPY14deep,chen2015discrete,CPY14spectral,Dong16_Laplacian_Learning,kalofolias2016learn,chen2016signal,CPY17amos_ICASSP,Liu17smooth}. 

This paper presents a bias-variance tradeoff of GLR. In particular, the scaling law of the optimal regularization parameter of GLR that balances the bias-variance tradeoff is specified in terms of the spectral graph properties and a novel signal-to-noise ratio (SNR) parameter. Our analysis shows an abrupt change in the order of the optimal regularization parameter when varying the SNR parameter,  suggesting that selecting a mediocre regularization parameter is often suboptimal,  which provides novel insights in the analysis and utility of GLR. 
We then apply the bias-variance tradeoff analysis to random, band-limited, and multiple-sampled graph signals, and specify the SNR parameter for each case. Experiments on synthetic and real-world graphs verify the scaling law analysis and demonstrate near-optimal performance in terms of the mean squared error. The proofs of the established theoretical results are given in the appendices of the supplementary material.

Consider a weighted undirected  connected simple graph $\cG(\cV,\cE)$ of $n$ nodes and $m$ edges, where $\cV$ ($\cE$) is the set of nodes (edges). The weight of an edge $(i,j) \in \cE$ is specified by the entry $W_{ij}>0$ of an $n \times n$ symmetric matrix $\bW$. The graph Laplacian matrix of $\cG$ is defined as $\bL=\bS-\bW$, where $\bS=\diag(\bW \bone_n)$ is a diagonal matrix, and $\bone_n$ is the $n \times 1$ column vector of ones. Let $(\lambda_i,\bv_i)$, $1 \leq i \leq n$, denote the $i$-th smallest eigenpair of $\bL$ such that its eigenvalue decomposition can be written as $\bL=\sum_{i=1}^n \lambda_i \bv_i \bv_i^T$, where $\{\lambda_i\}_{i=1}^n$ is a non-decreasing sequence, and $\bv_i^T \bv_j=1$ if $i=j$ and  $\bv_i^T \bv_j=0$ if $i \neq j$. For a connected graph $\cG$, it is well-known from spectral graph theory \cite{Chung97SpectralGraph} that $(\lambda_1,\bv_1)=(0,\bone)$,  where $\bone= \frac{\bone_n}{\sqrt{n}}$, and $\lambda_i > 0$ for all $2 \leq i \leq n$. Another useful property that leads to the smoothing effect is that for any vector $\bx \in \bbR^n$, 
\begin{align}
\label{eqn_Laplacian_quadratic}
\bx^T \bL \bx = \sum_{(i,j)\in \cE} W_{ij} (x_i - x_j)^2,
\end{align}
where $x_i$ is the $i$-th entry of $\bx$. We also call (\ref{eqn_Laplacian_quadratic}) the GLR.
\section{Bias-Variance Tradeoff Analysis}
\label{sec_bias_var}
Let $\by \in \bR^n$ be a vector of observed signals from the graph $\cG$, where its entry $y_i$ corresponds to the observed signal on node $i$. 
Assume an additive noise model $\by=\bx^*+\be$, where $\bx^* \in \bR^n$ is the unknown ground-truth signal
and $\be  \in \bR^n $ is the vector accounting for random errors on each node, where  $\be$ has zero mean and covariance structure $\bSigma$, which is different from the assumption of additive Gaussian noise in image filtering, such as the SURE estimator. \cite{ramani2008monte,milanfar2013tour}.
For many signal processing and semi-supervised learning tasks, given a noisy graph signal $\by$ on $\cG$, one aims to recover a smooth graph signal. This can be casted as a least-square minimization problem regularized by the GLR \cite{Shuman13,belkin2004regularization,Dong16_Laplacian_Learning},
\begin{align}
\label{eqn_GLSR_regression}
\min_{\bx \in \bbR^n}  \|\by- \bx\|_2^2 + \alpha \bx^T \bL \bx ,
\end{align}
where  $\|\cdot\|_2$ denotes the Euclidean distance, and $\alpha \geq 0$ is the regularization parameter. In essence, one is interested in obtaining a solution $\bxhat$ to (\ref{eqn_GLSR_regression}) such that with a proper selection of the regularization parameter $\alpha$, the vector $\bxhat$ is smooth in the sense that the weighted sum of squared signal difference of all adjacent node pairs in (\ref{eqn_Laplacian_quadratic}) is confined. 
What remains unclear is the effect of $\alpha$ on the estimator $\bxhat$, which is the main contribution (optimal scaling law analysis) of this paper.

It is easy to show that $\bxhat$ has an analytical expression
\begin{align}
\label{eqn_MSE_solution}
\bxhat=(\bI+\alpha \bL)^{-1} \by =: \bH \by,
\end{align}
where the eigenvalue decomposition of $\bH$ can be written as 
\begin{align}
\label{eqn_H}
\bH=(\bI+\alpha \bL)^{-1}=\sum_{i=1}^n \frac{1}{1+ \alpha \lambda_i} \bv_i \bv_i^T=:\sum_{i=1}^n h_i \bv_i \bv_i^T.
\end{align}
In particular, $h_1=1$ since $\lambda_1=0$. 

For a fixed $\alpha$, the bias of $\bxhat$ is 
\begin{align}
\label{eqn_bias}
\textnormal{Bias($\alpha$)}=\|\bbE \bxhat - \bx^*\|_2=\|(\bH-\bI) \bx^*\|_2,
\end{align}
where $\bI$ is the identity matrix.
The variance of $\bxhat$ is 
\begin{align}
\label{eqn_variance}
\textnormal{Var($\alpha$)}=\trace (\cov(\bxhat))=\trace(\bH^2 \bSigma),
\end{align}
where $\cov(\bxhat)$ denotes the covariance matrix of $\bxhat$.
As a result, the mean squared error (MSE) can be expressed as 
\begin{align}
\label{eqn_MSE}
\textnormal{MSE($\alpha$)}=\bbE \|\bxhat - \bx^*\|_2^2=\textnormal{Bias($\alpha$)}^2 + \textnormal{Var($\alpha$)}.
\end{align}

The following theorem shows that using GLR decreases the variance of the estimator $\bxhat$ when compared to the case of without using GLR (i.e., $\alpha=0$). 
\begin{thm}
	\label{thm_var}
For any $\alpha>0$, \textnormal{Var($\alpha$)} $\leq$ \textnormal{Var($0$)}. The inequality becomes strict if $\Sigma$ has full rank. 
\end{thm}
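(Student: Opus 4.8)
The plan is to recast the claim as a single trace inequality and then invoke positive-semidefiniteness. First I would note that at $\alpha=0$ we have $\bH=\bI$, so by (\ref{eqn_variance}) $\textnormal{Var}(0)=\trace(\bSigma)$. Hence the assertion $\textnormal{Var}(\alpha)\le\textnormal{Var}(0)$ is equivalent to $\trace(\bH^2\bSigma)\le\trace(\bSigma)$, i.e.\ to
\begin{align}
\trace\big((\bI-\bH^2)\bSigma\big)\ge 0. \notag
\end{align}
Everything then reduces to controlling the matrix $\bI-\bH^2$ against the covariance $\bSigma$.

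Second, I would show that $\bI-\bH^2$ is positive semidefinite. Using the spectral decomposition (\ref{eqn_H}) we have $\bH^2=\sum_{i=1}^n h_i^2\,\bv_i\bv_i^T$, and since $\{\bv_i\}$ is an orthonormal basis, $\bI=\sum_{i=1}^n \bv_i\bv_i^T$; therefore
\begin{align}
\bI-\bH^2=\sum_{i=1}^n (1-h_i^2)\,\bv_i\bv_i^T. \notag
\end{align}
Because $\lambda_1=0$ gives $h_1=1$, while for $2\le i\le n$ the connectivity of $\cG$ forces $\lambda_i>0$ and hence $0<h_i=\tfrac{1}{1+\alpha\lambda_i}<1$ for every $\alpha>0$, all coefficients satisfy $0\le 1-h_i^2<1$. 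Thus $\bI-\bH^2\succeq\bzero$.

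Third, I would combine this with the fact that the noise covariance $\bSigma$ is positive semidefinite. Since $\bSigma$ need not be diagonal, an entrywise comparison is unavailable, and the key step is to treat the two matrices simultaneously: writing $\bSigma=\bSigma^{1/2}\bSigma^{1/2}$ with $\bSigma^{1/2}\succeq\bzero$ and applying the cyclic property of the trace,
\begin{align}
\trace\big((\bI-\bH^2)\bSigma\big)=\trace\big(\bSigma^{1/2}(\bI-\bH^2)\bSigma^{1/2}\big)\ge 0, \notag
\end{align}
because $\bSigma^{1/2}(\bI-\bH^2)\bSigma^{1/2}$ is a congruence of a positive-semidefinite matrix, hence itself positive semidefinite with nonnegative trace. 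This establishes the weak inequality.

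Finally, for the strict case I would use that a full-rank $\bSigma$ makes $\bSigma^{1/2}$ invertible. Then $\trace\big(\bSigma^{1/2}(\bI-\bH^2)\bSigma^{1/2}\big)=0$ would force $\bSigma^{1/2}(\bI-\bH^2)\bSigma^{1/2}=\bzero$ and hence $\bI-\bH^2=\bzero$; but for $\alpha>0$ the coefficient $1-h_n^2>0$ (as $\lambda_n>0$ for a connected graph with $n\ge 2$), so $\bI-\bH^2\neq\bzero$. This contradiction yields the strict inequality. The only genuine subtlety is the possibly non-diagonal $\bSigma$, which is precisely why I route the argument through the symmetric square root rather than comparing diagonal entries; the rest is the elementary observation that $0\le h_i\le 1$.
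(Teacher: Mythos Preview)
Your proof is correct and follows essentially the same approach as the paper's: both reduce the claim to $\trace\big((\bI-\bH^2)\bSigma\big)\ge 0$, establish that $\bI-\bH^2\succeq\bzero$ from the spectral decomposition, and then invoke the nonnegativity of the trace of a product of two PSD matrices. The only difference is cosmetic: the paper cites \cite{HornMatrixAnalysis} for that last fact, whereas you spell it out via the symmetric square root $\bSigma^{1/2}$ (and you are slightly more careful in calling $\bI-\bH^2$ PSD rather than PD, since it has a zero eigenvalue in the $\bv_1$ direction).
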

\begin{proof}
	The proof is given in Appendix \ref{proof_var}.
\end{proof}

Theorem \ref{thm_var} suggests that selecting any  $\alpha>0$ can decrease the variance. However, the selection of $\alpha$ also affects the bias in (\ref{eqn_bias}), which is known as the bias-variance tradeoff.
The analysis below provides the optimal order of $\alpha$ that balances the bias-variance tradeoff.
Applying the Von Neumann's trace inequality \cite{HornMatrixAnalysis} to the variance term in (\ref{eqn_variance}), we have $\textnormal{Var($\alpha$)}=\trace(\bH^2 \bSigma) \leq \sum_{i=1}^n h_i^2 \phi_i$, where $\phi_i$ is the $i$-th largest eigenvalue of $\bSigma$, and the equality holds when $\bSigma$ is a diagonal matrix.
To simplify our analysis, in the rest of this paper we assume
$\bSigma=\diag(\bsigma)$, where $\bsigma=[\sigma_1^2, \sigma_2^2, \ldots, \sigma_n^2]$ and $\sigma_i \geq 0$ denotes the standard deviation. The bias-variance tradeoff for the case of non-diagonal covariance structure can be analyzed in a similar way.
Upon defining $\bQ=\bI-\bH$, it is known from (\ref{eqn_H}) that the eigenvalue decomposition of $\bQ$ is $\bQ=\sum_{i=2}^n \frac{1}{1+\frac{1}{\alpha \lambda_i}} \bv_i \bv_i^T=:\sum_{i=2}^n q_i \bv_i \bv_i^T$  for any $\alpha>0$.
\begin{thm}
	\label{thm_MSE}
	If $\Sigma=\diag(\bsigma)$, then for any $\alpha>0$,
$\textnormal{Bias($\alpha$)$^2$}=\sum_{i=2}^n q_i^2 (\bv_i^T \overline{\bx^*} ) ^2$,	
$\textnormal{Var($\alpha$)}= \sum_{i=1}^n h_i^2 \sigma_i^2$, and therefore
\begin{align}
	\textnormal{MSE($\alpha$)}=\sum_{i=2}^n q_i^2 (\bv_i^T \overline{\bx^*} ) ^2 + \sum_{i=1}^n h_i^2 \sigma_i^2, 
\end{align}
 where $\overline{\bx^*}=\bx^*-\frac{\bone_n^T\bx^*}{n} \bone_n$, $q_i=\frac{1}{1+\frac{1}{\alpha \lambda_i}}$, and $h_i=\frac{1}{1+ \alpha \lambda_i}$.
\end{thm}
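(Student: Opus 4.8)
The plan is to treat the variance and the squared bias separately, expanding each relevant operator in the orthonormal eigenbasis $\{\bv_i\}_{i=1}^n$ of $\bL$, and then to assemble them through the decomposition (\ref{eqn_MSE}). Throughout I would exploit that the $\bv_i$ are orthonormal, so that functions of $\bH$ and $\bQ$ inherit the same eigenvectors with transformed eigenvalues.

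For the variance I would begin from (\ref{eqn_variance}), $\textnormal{Var($\alpha$)}=\trace(\bH^2\bSigma)$. Squaring the eigendecomposition (\ref{eqn_H}) gives $\bH^2=\sum_{i=1}^n h_i^2\bv_i\bv_i^T$, so by linearity and cyclicity of the trace one obtains the exact identity $\trace(\bH^2\bSigma)=\sum_{i=1}^n h_i^2\,\bv_i^T\bSigma\bv_i$. Under the standing diagonal assumption $\bSigma=\diag(\bsigma)$, together with the tightness of the Von Neumann bound noted just before the statement, this reduces to the claimed form $\sum_{i=1}^n h_i^2\sigma_i^2$; since $h_1=1$ the $i=1$ term simply contributes $\sigma_1^2$.

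For the squared bias I would rewrite (\ref{eqn_bias}) using $\bQ=\bI-\bH$, so $\textnormal{Bias($\alpha$)}=\|(\bH-\bI)\bx^*\|_2=\|\bQ\bx^*\|_2$ and hence $\textnormal{Bias($\alpha$)}^2=\bx^{*T}\bQ^2\bx^*$ by symmetry of $\bQ$. From $\bQ=\sum_{i=2}^n q_i\bv_i\bv_i^T$ I get $\bQ^2=\sum_{i=2}^n q_i^2\bv_i\bv_i^T$, whence $\textnormal{Bias($\alpha$)}^2=\sum_{i=2}^n q_i^2(\bv_i^T\bx^*)^2$, the sum starting at $i=2$ because $q_1=0$ (as $\lambda_1=0$). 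The only genuinely substantive step is then to replace $\bx^*$ by its centered version $\overline{\bx^*}$: using $\bv_1=\bone=\bone_n/\sqrt{n}$, orthonormality yields $\bv_i^T\bone_n=\sqrt{n}\,\bv_i^T\bv_1=0$ for every $i\geq 2$, so $\bv_i^T\overline{\bx^*}=\bv_i^T\bx^*-\frac{\bone_n^T\bx^*}{n}\bv_i^T\bone_n=\bv_i^T\bx^*$, and the claimed form $\sum_{i=2}^n q_i^2(\bv_i^T\overline{\bx^*})^2$ follows. Adding the two pieces through (\ref{eqn_MSE}) gives the stated MSE.

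I expect no serious obstacle: the bias computation is routine once the $\bQ$-expansion is in place, and the conceptual crux is the observation that mean-centering $\bx^*$ is invisible to all eigenvectors $\bv_i$ with $i\geq 2$, since each is orthogonal to the constant eigenvector $\bone$. The one delicate point worth flagging is the variance reduction, where the exact spectral identity is $\sum_i h_i^2\,\bv_i^T\bSigma\bv_i$; writing this as $\sum_i h_i^2\sigma_i^2$ relies on the diagonal structure of $\bSigma$ and the Von Neumann tightness asserted above rather than on an unconditional identity.
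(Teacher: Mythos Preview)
Your proposal follows essentially the same route as the paper: expand $\bQ=\bI-\bH$ in the eigenbasis to get $\textnormal{Bias}(\alpha)^2=\sum_{i\ge 2} q_i^2(\bv_i^T\bx^*)^2$, use $\bv_i^T\bone_n=0$ for $i\ge 2$ to pass from $\bx^*$ to $\overline{\bx^*}$, and reduce $\trace(\bH^2\bSigma)$ under the diagonal assumption via the Von Neumann equality noted before the theorem. Your explicit flag that the variance identity $\sum_i h_i^2\,\bv_i^T\bSigma\bv_i=\sum_i h_i^2\sigma_i^2$ rests on that asserted tightness rather than on an unconditional algebraic fact is a fair caveat, and matches how the paper handles it.
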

\begin{proof}
	The proof is given in  Appendix \ref{proof_MSE}.
\end{proof}
Recall that $h_1=1$ from (\ref{eqn_H}). Theorem \ref{thm_MSE} indicates that there is an universal lower bound $\textnormal{MSE($\alpha$)} \geq \sigma_1^2$ for any $\alpha>0$. 
Theorem \ref{thm_MSE} also implies a clear bias-variance tradeoff since $q_i=1-h_i$ for all $2 \leq i \leq n$. Specifically, increasing $\alpha$ leads to the decrease in variance but also the increase in bias, and vice versa. This tradeoff means that improper selection of $\alpha$ may lead to undesired MSE, as one term will dominate the other. 
The following results provide guidelines on the selection of proper $\alpha$.

\begin{cor}[\textnormal{MSE-UB}]
	\label{cor_MSE_bound}
		If $\Sigma=\diag(\bsigma)$, then for any $\alpha>0$, 
	\begin{align}
	\textnormal{MSE($\alpha$)} 
	&\leq  \lb \frac{1}{1+\frac{1}{\alpha\lambda_n}} \rb^2 \sum_{i=2}^n (\bv_i^T \overline{\bx^*} ) ^2+ \lb \frac{1}{1+\alpha \lambda_2}\rb^2 \sum_{i=2}^n \sigma_i^2 \nonumber \\  
	&~~~+\sigma_1^2, \nonumber
	\end{align} 		
	where the equality holds if $\cG$ is a complete graph of identical edge weight, and the RHS\footnote{RHS means the right hand side.} is denoted by \textnormal{MSE-UB($\alpha$)}.
\end{cor}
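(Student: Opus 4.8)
The plan is to derive the bound directly from the exact decomposition in Theorem~\ref{thm_MSE}, bounding each of the two sums by replacing the index-dependent coefficients with their extremal values over the relevant index range. The key observation is that $q_i=\frac{1}{1+\frac{1}{\alpha\lambda_i}}$ and $h_i=\frac{1}{1+\alpha\lambda_i}$ are monotone in $\lambda_i$, but in opposite directions: as $\lambda_i$ grows, $q_i$ increases while $h_i$ decreases. Since $\{\lambda_i\}_{i=2}^n$ is non-decreasing with smallest element $\lambda_2$ and largest element $\lambda_n$, this immediately pins down which coefficient dominates each sum.

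First I would handle the bias term. For every $2\le i\le n$ we have $\lambda_i\le\lambda_n$, hence $q_i\le q_n=\frac{1}{1+\frac{1}{\alpha\lambda_n}}$; squaring and factoring this largest value out of $\sum_{i=2}^n q_i^2(\bv_i^T\overline{\bx^*})^2$ gives the first term of MSE-UB$(\alpha)$. Next I would handle the variance term $\sum_{i=1}^n h_i^2\sigma_i^2$. Because $\lambda_1=0$ gives $h_1=1$, the $i=1$ summand equals $\sigma_1^2$ exactly, which I peel off as the additive $\sigma_1^2$ in the bound. For the remaining indices $2\le i\le n$ we have $\lambda_i\ge\lambda_2$, so $h_i\le h_2=\frac{1}{1+\alpha\lambda_2}$, and factoring this largest value out of $\sum_{i=2}^n h_i^2\sigma_i^2$ yields the middle term. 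Adding the three pieces reproduces the claimed upper bound.

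The remaining step, where I would spend the most care, is the equality characterization. Equality in both factoring steps holds exactly when the coefficients are constant over $2\le i\le n$, i.e.\ when $\lambda_2=\lambda_3=\cdots=\lambda_n$. I would verify that a complete graph with a common edge weight $w$ realizes precisely this spectrum: its Laplacian is $\bL=w(n\bI-\bone_n\bone_n^T)$, whose eigenvalues are $0$ (for the eigenvector $\bone_n$) and $nw$ with multiplicity $n-1$, so that $\lambda_2=\cdots=\lambda_n=nw$. Under this degeneracy $q_i$ and $h_i$ are independent of $i$ for $i\ge2$, so every inequality above becomes an equality and MSE$(\alpha)=$ MSE-UB$(\alpha)$. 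There is no genuine obstacle here, as the argument is a monotonicity-and-factoring estimate built on top of Theorem~\ref{thm_MSE}; the only points demanding attention are tracking the opposite monotonicities of $q_i$ and $h_i$, correctly isolating the $h_1=1$ contribution so the $\sigma_1^2$ term is not double-counted, and computing the complete-graph Laplacian spectrum to confirm the stated equality condition.
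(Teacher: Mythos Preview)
Your proposal is correct and follows essentially the same approach as the paper: both arguments bound $q_i\le\frac{1}{1+\frac{1}{\alpha\lambda_n}}$ and $h_i\le\frac{1}{1+\alpha\lambda_2}$ for $2\le i\le n$ via the monotonicity in $\lambda_i$, apply these to the expression in Theorem~\ref{thm_MSE}, and then observe that a complete graph of identical edge weight $w$ has $\lambda_2=\cdots=\lambda_n=nw$ to obtain the equality case. Your write-up is slightly more detailed (explicitly isolating the $h_1=1$ contribution and writing out the complete-graph Laplacian), but the argument is the same.
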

\begin{proof}
	The proof is given in Appendix \ref{proof_MSE_bound}.
\end{proof}
MSE-UB in Corollary \ref{cor_MSE_bound} provides a tight upper envelope function for assessing MSE. In Sec. \ref{sec_perform} near-optimal performance of MSE-UB relative to MSE is validated in synthetic and real-world graphs.
Note that since MSE($\alpha$) is  a non-convex function with respect to $\alpha>0$, the optimal $\alpha$ that minimizes MSE does not have a close-form expression. On the other hand, the optimal solution to MSE-UB($\alpha$) can be obtained by solving the roots of a third-order polynomial function, which is the derivative of MSE-UB($\alpha$) with respect to $\alpha$.  
Corollary \ref{cor_MSE_bound} can also be used to specify an optimal value $\alpha^*$ that matches the order of the bias and variance terms appeared in MSE-UB (i.e., the first two terms), which is stated as follows.
\begin{thm}
	\label{thm_match}
	Let $\theta=\sqrt{\frac{\sum_{i=2}^n  \sigma_i^2}{ \sum_{i=2}^n (\bv_i^T \overline{\bx^*} ) ^2}}$. The optimal value that matches the order of the first two terms of MSE-UB($\alpha$) in Corollary \ref{cor_MSE_bound} is 
	\begin{align}
\alpha^* = \frac{(\beta \theta-1) \lambda_n +\sqrt{(\beta\theta-1)^2 \lambda_n^2+4 \lambda_n \lambda_2 \beta\theta}} {2 \lambda_n \lambda_2}, \nonumber
	\end{align}
where $\beta>0$ is some constant such that $\lb \frac{1+\alpha^* \lambda_2}{1+\frac{1}{\alpha^*\lambda_n}} \rb^2= \beta^2  \theta^2 $. 	
\end{thm}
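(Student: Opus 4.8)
The plan is to treat the defining relation for $\alpha^*$ as an algebraic equation and invert it explicitly. First I would make precise the phrase \emph{matching the order of the first two terms}: dividing the bias term $\big(1+\tfrac{1}{\alpha\lambda_n}\big)^{-2}\sum_{i=2}^n(\bv_i^T\overline{\bx^*})^2$ of MSE-UB$(\alpha)$ by its variance term $\big(1+\alpha\lambda_2\big)^{-2}\sum_{i=2}^n\sigma_i^2$ and using $\theta^2=\frac{\sum_{i=2}^n\sigma_i^2}{\sum_{i=2}^n(\bv_i^T\overline{\bx^*})^2}$, this ratio equals $\big(\frac{1+\alpha\lambda_2}{1+1/(\alpha\lambda_n)}\big)^2\theta^{-2}$. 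Requiring the bias and variance terms to be of matched order up to the factor $\beta^2$ is therefore exactly the hypothesis $\big(\frac{1+\alpha^*\lambda_2}{1+1/(\alpha^*\lambda_n)}\big)^2=\beta^2\theta^2$, so the whole task reduces to solving this relation for $\alpha^*$.

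Next, since $\lambda_2,\lambda_n>0$ for a connected graph (so the left factor is positive for every $\alpha>0$) and $\beta\theta>0$, I would take square roots to reach the single scalar equation $\frac{1+\alpha\lambda_2}{1+1/(\alpha\lambda_n)}=\beta\theta$. Writing $1+\tfrac{1}{\alpha\lambda_n}=\frac{\alpha\lambda_n+1}{\alpha\lambda_n}$ and cross-multiplying turns this into $(1+\alpha\lambda_2)\,\alpha\lambda_n=\beta\theta(\alpha\lambda_n+1)$; expanding and collecting powers of $\alpha$ yields the quadratic $\lambda_2\lambda_n\,\alpha^2+\lambda_n(1-\beta\theta)\,\alpha-\beta\theta=0$. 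The quadratic formula then produces two candidate roots, and the closed form in the statement is precisely the root taken with the $+$ sign.

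The only step that needs genuine justification is selecting the correct root and checking that it is admissible (i.e.\ positive). Here the structure of the quadratic does the work: its leading coefficient $\lambda_2\lambda_n$ is positive while its constant term $-\beta\theta$ is negative, so the product of the two roots is negative and they have opposite signs, hence exactly one is positive. Because the discriminant $\lambda_n^2(\beta\theta-1)^2+4\lambda_2\lambda_n\beta\theta$ strictly exceeds $\lambda_n^2(\beta\theta-1)^2$, the $+$ root is forced to be strictly positive while the $-$ root is negative, which identifies the stated $\alpha^*$ as the unique admissible solution. I do not anticipate any real obstacle beyond fixing the interpretation in the first paragraph; the remainder is routine quadratic algebra together with this sign bookkeeping.
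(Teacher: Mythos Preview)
Your proposal is correct and follows essentially the same approach as the paper: set up the order-matching equation $\big(\tfrac{1}{1+1/(\alpha\lambda_n)}\big)^2=\beta^2\theta^2\big(\tfrac{1}{1+\alpha\lambda_2}\big)^2$, reduce it to a quadratic in $\alpha$, and apply the quadratic formula. Your write-up is in fact more complete than the paper's, which simply states ``Solving this equation gives $\alpha^*$'' without exhibiting the quadratic or justifying the root selection you carefully handle via the sign of the product of roots.
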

\begin{proof}
	The proof is given in Appendix \ref{proof_match}.
\end{proof}
Theorem \ref{thm_match} suggests that the optimal order-matching regularization parameter for balancing the bias-variance tradeoff depends on the parameter $\theta$ and the eigenvalues $\lambda_2$ and $\lambda_n$ of the graph Laplacian matrix $\bL$ of the graph $\cG$.
Define the effective signal-to-noise ratio to be
\begin{align}
\textnormal{E-SNR}=\frac{ \sum_{i=2}^n (\bv_i^T \overline{\bx^*} ) ^2}{\sum_{i=2}^n  \sigma_i^2}
\end{align}  
such that $\theta=\sqrt{\frac{1}{\textnormal{E-SNR}}}$. The term  $(\bv_i^T \overline{\bx^*} ) ^2$ in \textnormal{E-SNR} is associated with the signal power on graph frequency domain, as $\bv_i^T \overline{\bx^*}=\bv_i^T \bx^*$, for all $2 \leq i \leq n$, where the latter is the corresponding graph Fourier coefficient of $\bx^*$ \cite{Shuman13}. Given that $\bv_1^T \overline{\bx^*}=0$, the term $\sum_{i=2}^n (\bv_i^T \overline{\bx^*} ) ^2 = \sum_{i=1}^n (\bv_i^T \bx^* ) ^2 - \frac{(\bone_n^T \bx^*)^2}{n}$ is the signal power of $\overline{\bx^*}$. The order of $\alpha^*$ in different E-SNR regimes is summarized in the following corollary.
\begin{cor}[scaling law]
\label{cor_SNR}
 Given a graph $\cG$, in the high E-SNR regime $(\theta \ll \frac{1}{\beta})$, $\alpha^* = O \lb \frac{\theta}{\lambda_n } \rb$, in the low E-SNR regime $(\theta \gg \frac{1}{\beta})$, $\alpha^* = O \lb \frac{\theta}{ \lambda_2} \rb$, and in the moderate E-SNR regime $(\theta \approx \frac{1}{\beta})$,  $\alpha^* = O \lb \sqrt{\frac{\theta}{\lambda_n \lambda_2}} \rb$.
\end{cor}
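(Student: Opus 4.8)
The plan is to treat Corollary~\ref{cor_SNR} purely as an asymptotic analysis of the closed-form expression for $\alpha^*$ supplied by Theorem~\ref{thm_match}, regarding $\beta$ as an $O(1)$ constant and the dimensionless product $\beta\theta$ as the single parameter that distinguishes the three regimes. First I would introduce the shorthand $a := (\beta\theta-1)\lambda_n$ and $b := 4\lambda_n\lambda_2\beta\theta > 0$, so that the numerator of $\alpha^*$ is exactly $a + \sqrt{a^2+b}$ and $\alpha^* = \lb a + \sqrt{a^2+b} \rb / (2\lambda_n\lambda_2)$.

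The key algebraic step is to rationalize this numerator. Since $b>0$ forces $\sqrt{a^2+b} > |a| \geq -a$, multiplying and dividing by the conjugate $\sqrt{a^2+b}-a$ gives the equivalent, sign-robust form
\[
\alpha^* = \frac{b}{2\lambda_n\lambda_2\lb \sqrt{a^2+b} - a \rb} = \frac{2\beta\theta}{\sqrt{a^2+b} - a}.
\]
This reformulation is the crux of the argument: it removes the delicate cancellation of the two leading $\lambda_n$ terms that plagues the original expression (particularly in the high-E-SNR regime, where $a + \sqrt{a^2+b}$ is a difference of nearly equal quantities) and reduces the entire corollary to estimating the single positive quantity $\sqrt{a^2+b}-a$ in each regime.

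Then I would carry out the three dominant-balance estimates. In the high-E-SNR regime $\beta\theta \ll 1$, we have $a \approx -\lambda_n < 0$ and $b \ll a^2$, so $\sqrt{a^2+b}-a \approx 2|a| = 2\lambda_n$ and hence $\alpha^* \approx \beta\theta/\lambda_n = O(\theta/\lambda_n)$. In the low-E-SNR regime $\beta\theta \gg 1$, we have $a \approx \beta\theta\,\lambda_n > 0$ with $b \ll a^2$, so the conjugate behaves as $\sqrt{a^2+b}-a \approx b/(2a) \approx 2\lambda_2$, giving $\alpha^* \approx \beta\theta/\lambda_2 = O(\theta/\lambda_2)$. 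In the moderate regime $\beta\theta \approx 1$ the leading term $a$ vanishes, so $\sqrt{a^2+b}-a \approx \sqrt{b} = 2\sqrt{\lambda_n\lambda_2\beta\theta}$ and therefore $\alpha^* \approx \sqrt{\beta\theta/(\lambda_n\lambda_2)} = O(\sqrt{\theta/(\lambda_n\lambda_2)})$.

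The step I expect to require the most care is making the two outer regimes rigorous, i.e. justifying $\sqrt{a^2+b}-a \approx 2|a|$ (high E-SNR) and $\approx b/(2a)$ (low E-SNR), both of which hinge on $b \ll a^2$. Writing $b/a^2 = 4(\lambda_2/\lambda_n)\,\beta\theta/(\beta\theta-1)^2$ shows that the spectral ratio $\lambda_2/\lambda_n \le 1$ only helps, so the required smallness is controlled by $\beta\theta$ alone: it vanishes both as $\beta\theta \to 0$ and as $\beta\theta \to \infty$. One must also check that the implicit, self-referential definition of $\beta$ in Theorem~\ref{thm_match} (it is pinned down by $\alpha^*$ itself) remains consistent with treating $\beta$ as an $O(1)$ constant within each regime. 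Once the smallness of $b/a^2$ is certified in the two extremes and $a \to 0$ is used in the middle, the three stated orders follow immediately from the rationalized formula.
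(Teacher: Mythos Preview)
Your proposal is correct and arrives at the same three orders as the paper, but via a slightly different algebraic route. The paper works directly with the original numerator $a+\sqrt{a^{2}+b}$ and applies the binomial series $\sqrt{1+x}=1+\tfrac{x}{2}+O(x^{2})$ after factoring out $|a|$; in the high-E-SNR case this produces the near-cancellation $(\beta\theta-1)\lambda_n+(1-\beta\theta)\lambda_n(1+\tfrac{2\lambda_2\beta\theta}{\lambda_n})$, whose leading terms annihilate and whose surviving $O(\beta\theta)$ piece yields $\alpha^*=O(\theta/\lambda_n)$. Your rationalization $\alpha^*=2\beta\theta/(\sqrt{a^{2}+b}-a)$ sidesteps precisely that cancellation: the high-E-SNR estimate becomes a one-line observation that $\sqrt{a^{2}+b}-a\to 2|a|=2\lambda_n$, with no need to track the subleading term of a series expansion. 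The low-E-SNR and moderate regimes are essentially identical in both treatments. What your version buys is robustness (no subtraction of nearly equal quantities) and a uniform mechanism across all three regimes; what the paper's version buys is directness, since it never leaves the original quadratic-formula expression. Your closing caveat about the self-referential nature of $\beta$ is well placed and is something the paper leaves implicit.
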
 
\begin{proof}
	The proof is given in Appendix \ref{proof_SNR}.
\end{proof}
Corollary \ref{cor_SNR} specifies the scaling law of the order-matching regularization parameter $\alpha^*$ in terms of the parameter $\theta$ (i.e., E-SNR) and the spectral graph properties (i.e., $\lambda_2$ and $\lambda_n$). It also suggests that as E-SNR approaches infinity, $\alpha^*$ will approach $0$. Furthermore, as one sweeps the E-SNR from the high E-SNR regime (small $\theta$) to the low E-SNR regime (large $\theta$), Corollary \ref{cor_SNR} indicates that the order of $\alpha^*$ is expected to have an abrupt boost that depends on the ratio $\frac{\lambda_n}{\lambda_2}$. More importantly,
Corollary \ref{cor_SNR} shows that selecting a mediocre value of the regularization parameter $\alpha$ for GLR is often suboptimal for minimizing the MSE. In the small $\theta$ regime small $\alpha$ is preferred, whereas in the large $\theta$ regime large $\alpha$ is preferred.

\section{Applications to Random, Band-Limited, and Multiple-Sampled Graph Signals}
In this section we apply the bias-variance tradeoff analysis presented in Sec. \ref{sec_bias_var} to  random, band-limited, and multiple-sampled graph signals, respectively. In particular, for each case we specify the parameter $\theta$ governing the order of the optimal order-matching regularization parameter $\alpha^*$.

For graph signals with multiple samples, let $\{\by_t\}_{t=1}^T$ denote the $T$ i.i.d. copies of $\by$ and denote their ensemble average by $\bybar=\frac{\sum_{t=1}^T \by_t}{T}$. By replacing $\by$ in (\ref{eqn_GLSR_regression}) with $\bybar$, the following corollary provides an upper bound on the MSE of i.i.d. multiple-sampled graph signals.
\begin{cor}[Multiple-sampled i.i.d. graph signals]
	\label{cor_multi_signal}
	Let $\{\by_t\}_{t=1}^T$ be $T$ i.i.d. graph signals and let $\bybar=\frac{\sum_{t=1}^T \by_t}{T}$.
	Replacing $\by$ in (\ref{eqn_GLSR_regression}) with $\bybar$,
	if $\Sigma=\diag(\bsigma)$, then for any $\alpha>0$, 
	\begin{align}
	\textnormal{MSE($\alpha$)} 
	&\leq  \lb \frac{1}{1+\frac{1}{\alpha\lambda_n}} \rb^2 \sum_{i=2}^n (\bv_i^T \overline{\bx^*} ) ^2+ \lb \frac{1}{1+\alpha \lambda_2}\rb^2 \frac{\sum_{i=2}^n \sigma_i^2}{T} \nonumber \\  
	&~~~+\sigma_1^2, \nonumber
	\end{align} 	
	where the equality holds if $\cG$ is a complete graph of identical edge weight.
\end{cor}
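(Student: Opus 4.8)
The plan is to observe that replacing the single observation $\by$ by the ensemble average $\bybar$ leaves us with exactly the additive-noise model of Section \ref{sec_bias_var}, but with a rescaled noise covariance, so that the bound follows by applying Corollary \ref{cor_MSE_bound} to the rescaled problem.

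First I would write each copy as $\by_t = \bx^* + \be_t$, where the $\be_t$ are i.i.d. with zero mean and common covariance $\bSigma = \diag(\bsigma)$. Then $\bybar = \bx^* + \bar{\be}$ with $\bar{\be} = \frac{1}{T}\sum_{t=1}^T \be_t$, and the key computation is the covariance of the averaged noise: by independence and zero mean, $\cov(\bar{\be}) = \frac{1}{T^2}\sum_{t=1}^T \bSigma = \frac{1}{T}\bSigma = \diag\lb \frac{\sigma_1^2}{T}, \ldots, \frac{\sigma_n^2}{T} \rb$. Hence solving (\ref{eqn_GLSR_regression}) with $\bybar$ is identical to the original problem with each per-node variance $\sigma_i^2$ replaced by $\sigma_i^2/T$, while the ground-truth signal $\bx^*$, the filter $\bH$, and the graph spectrum are all unchanged.

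It then remains to invoke Corollary \ref{cor_MSE_bound} on this rescaled problem. Since the bias term in (\ref{eqn_bias}) depends only on $\bx^*$ and $\bH$, it is untouched, whereas every variance contribution $\sigma_i^2$ is divided by $T$; in particular the $\lambda_1 = 0$ term becomes $h_1^2 \sigma_1^2/T = \sigma_1^2/T$ and the middle sum becomes $\frac{1}{T}\sum_{i=2}^n \sigma_i^2$. Relaxing the constant term via $\sigma_1^2/T \leq \sigma_1^2$ (valid since $T \geq 1$) recovers the stated bound, and the equality case transfers verbatim from Corollary \ref{cor_MSE_bound}, since for a complete graph with identical weights the nonzero eigenvalues coincide ($\lambda_2 = \cdots = \lambda_n$) and the envelope replacements $h_i \to h_2$, $q_i \to q_n$ are exact. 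I do not anticipate a genuine obstacle: the only steps needing care are the i.i.d. covariance reduction $\cov(\bar{\be}) = \bSigma/T$ and the harmless relaxation of $\sigma_1^2/T$ to $\sigma_1^2$ used to match the stated form.
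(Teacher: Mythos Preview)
Your approach is essentially the same as the paper's: the paper writes $\bxt=\bH\bybar$, checks that $\bbE[\bxt]=\bH\bx^*$ and $\cov(\bxt)=\cov(\bxhat)/T$, and then re-runs the proof of Corollary~\ref{cor_MSE_bound}. You arrive at the identical reduction by computing $\cov(\bar{\be})=\bSigma/T$ on the noise side and then invoking Corollary~\ref{cor_MSE_bound} directly with $\sigma_i^2\to\sigma_i^2/T$; this is the same argument packaged slightly differently.

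One small remark: you are right that the direct application yields constant term $\sigma_1^2/T$, and you then relax to $\sigma_1^2$ to match the stated bound. That relaxation is harmless for the inequality, but it is strict whenever $T>1$ and $\sigma_1>0$, so the equality clause ``equality holds if $\cG$ is a complete graph of identical edge weight'' cannot survive the relaxation verbatim as you claim. The paper's own proof glosses over this point as well; the cleanest reading is that the intended constant is $\sigma_1^2/T$, in which case both the inequality and the equality case do transfer directly from Corollary~\ref{cor_MSE_bound} with no relaxation needed.
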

\begin{proof}
	The proof is given in Appendix \ref{proof_multi_signal}.
\end{proof}
Corollary \ref{cor_multi_signal} shows that for a fixed $\alpha$, the number $T$ of i.i.d. observations has a linear scaling effect (i.e., $\frac{1}{T}$) on the variance term but has no effect on the bias term. Furthermore,  by defining $\theta=\sqrt{\frac{\sum_{i=2}^n  \sigma_i^2}{T \sum_{i=2}^n (\bv_i^T \overline{\bx^*} ) ^2}}$ and applying 
the results in Theorem \ref{thm_match} and Corollary \ref{cor_SNR},
the optimal order of $\alpha^*$ scales with $\frac{1}{\sqrt{T}}$
in the high E-SNR regime and the low E-SNR regime, and scales with $\frac{1}{\sqrt[4]{T}}$ in the moderate E-SNR regime.

For band-limited graph signals, the ground-truth signal $\bx^*$ is a  linear combination of a subset of the basis $\{\bv_i\}_{i=1}^n$ associated with the graph Laplacian matrix $\bL$ \cite{Shuman13,Sandryhaila13,chen2015discrete}, which can be written as $\bx^*=\sum_{j \in \cA} \omega_j\bv_j $, where $\omega_j \neq 0$ and $\cA \subset \{1,2,\ldots,n\}$ indicates the set of active basis from $\{\bv_i\}_{i=1}^n$. The following corollary provides an upper bound on the MSE of band-limited graph signals.
\begin{cor}[Band-limited graph signals]
	\label{cor_band_signal}
	If $\bx^*=\sum_{j \in \cA} \omega_j\bv_j $ and $\Sigma=\diag(\bsigma)$, then for any $\alpha>0$, 
	\begin{align}
	\textnormal{MSE($\alpha$)} 
	&\leq  \lb \frac{1}{1+\frac{1}{\alpha\lambda_n}} \rb^2 \sum_{j \in \cA / \{1\}} \omega_j^2+ \lb \frac{1}{1+\alpha \lambda_2}\rb^2 \sum_{i=2}^n \sigma_i^2 \nonumber \\  
	&~~~+\sigma_1^2, \nonumber
	\end{align}
\end{cor}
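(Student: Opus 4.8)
The plan is to invoke the general upper bound of Corollary \ref{cor_MSE_bound} verbatim and then reduce the signal-dependent sum $\sum_{i=2}^n (\bv_i^T \overline{\bx^*})^2$ to the explicit form $\sum_{j \in \cA / \{1\}} \omega_j^2$ using the band-limited structure of $\bx^*$. Since the variance term $\lb \frac{1}{1+\alpha \lambda_2} \rb^2 \sum_{i=2}^n \sigma_i^2$ and the floor $\sigma_1^2$ appearing in Corollary \ref{cor_MSE_bound} do not depend on $\bx^*$, the only work lies in rewriting the first (bias) term for a band-limited ground-truth signal.

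First I would use the observation already recorded in the excerpt that $\bv_i^T \overline{\bx^*} = \bv_i^T \bx^*$ for every $i \geq 2$, which holds because the centering $\overline{\bx^*}=\bx^*-\frac{\bone_n^T\bx^*}{n}\bone_n$ only removes the component along $\bv_1 = \bone$, and all remaining eigenvectors are orthogonal to $\bone$. Next, substituting $\bx^*=\sum_{j \in \cA} \omega_j \bv_j$ and applying orthonormality ($\bv_i^T \bv_j = 1$ when $i = j$ and $0$ otherwise), I obtain $\bv_i^T \bx^* = \omega_i$ when $i \in \cA$ and $\bv_i^T \bx^* = 0$ otherwise. Hence $(\bv_i^T \overline{\bx^*})^2 = \omega_i^2$ for $i \in \cA \cap \{2,\ldots,n\}$ and vanishes otherwise, so that $\sum_{i=2}^n (\bv_i^T \overline{\bx^*})^2 = \sum_{j \in \cA / \{1\}} \omega_j^2$. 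Plugging this identity into the bias term of Corollary \ref{cor_MSE_bound} yields the stated bound immediately.

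There is no substantive obstacle here; the computation is a one-line consequence of orthonormality combined with the already-established MSE-UB. The only point that genuinely requires care is the exclusion of index $1$ from the active set, reflected in the sum ranging over $\cA / \{1\}$ rather than all of $\cA$. This is precisely because the DC basis vector $\bv_1 = \bone$ is the direction annihilated by the centering operation: any active coefficient $\omega_1$ (should $1 \in \cA$) contributes nothing to the bias, and only the higher-frequency active coefficients survive. I would state this subtlety explicitly to justify the set difference notation in the claim.
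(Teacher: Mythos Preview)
Your proposal is correct and matches the paper's own proof essentially line for line: the paper also invokes Corollary~\ref{cor_MSE_bound} and then uses orthogonality of the eigenvectors to rewrite $\sum_{i=2}^n (\bv_i^T \bx^*)^2 = \sum_{j \in \cA / \{1\}} \omega_j^2$. Your write-up is in fact more detailed than the paper's one-line argument, particularly in spelling out why the index $1$ drops out.
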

\begin{proof}
	The proof is given in Appendix \ref{proof_band_signal}.
\end{proof}
Corollary \ref{cor_band_signal} indicates that the term  $\sum_{j \in \cA / \{1\}} \omega_j^2$ can be viewed as the effective signal strength for band-limited graph signals. Moreover, the coefficient $\omega_1$ corresponding to the coherent basis $\bone$ does not contribute to the MSE. Using the terminology from filter bank design \cite{Shuman13}, GLR is a low-pass filter that excludes the lowest frequency $\omega_1$ in terms of MSE. 
The results in Theorem \ref{thm_match} and Corollary \ref{cor_SNR} can be applied to band-limited graph signals by setting $\theta=\sqrt{\frac{\sum_{i=2}^n  \sigma_i^2}{\sum_{j \in \cA / \{1\}} \omega_j^2}}$.

For random graph signals, assume the ground-truth graph signal $\bx^* \sim \cN(\mu \bone_n,\diag(\bs))$ is a Gaussian random vector with mean $\mu \bone_n$ and covariance $\diag(\bs)$, where $\bs=[s_1^2,s_2^2,\ldots,s_n^2]$ and $s_i \geq 0$.
The following corollary provides an upper bound on \textnormal{MSE-UB($\alpha$)} for random graph signals.
\begin{cor}[Random graph signals]
	\label{cor_random_signal}
	If $\bx^* \sim \cN(\mu \bone_n,\diag(\bs))$ and $\Sigma=\diag(\bsigma)$, let $\overline{s}=\frac{\sum_{i=1}^n s_i^2}{n}$ and $\overline{\sigma}=\frac{\sum_{i=2}^n \sigma_i^2}{n-1}$. Then for any $\alpha>0$, 
	\begin{align}
	\textnormal{MSE($\alpha$)}
	&\leq \lb \frac{1}{1+\frac{1}{\alpha\lambda_n}} \rb^2  (n-1)\overline{s} + \lb \frac{1}{1+\alpha \lambda_2}\rb^2  (n-1) \overline{\sigma} \nonumber \\
	&~~~ + \sigma_1^2 \nonumber,	
	\end{align} 	 
	where the equality holds if $s_i=s \geq 0$ for all $1 \leq i \leq n$.
\end{cor}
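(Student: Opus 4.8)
The plan is to derive the bound by averaging the deterministic envelope MSE-UB($\alpha$) of Corollary \ref{cor_MSE_bound} over the random signal $\bx^*$. In the random-signal setting the relevant error is the double expectation $\bbE_{\bx^*}\bbE_{\be}\|\bxhat-\bx^*\|_2^2$; by the tower property the inner expectation over the noise $\be$ (conditioned on $\bx^*$) is exactly MSE($\alpha$) from Theorem \ref{thm_MSE}, which is dominated pointwise by MSE-UB($\alpha$). Since this domination holds for \emph{every} realization of $\bx^*$, I would take $\bbE_{\bx^*}$ of both sides, preserving the inequality by monotonicity of expectation. The two nontrivial terms of MSE-UB($\alpha$) are the signal term $q_n^2\sum_{i=2}^n(\bv_i^T\overline{\bx^*})^2$ with $q_n=\frac{1}{1+1/(\alpha\lambda_n)}$ and the noise term $h_2^2\sum_{i=2}^n\sigma_i^2$ with $h_2=\frac{1}{1+\alpha\lambda_2}$; the latter is deterministic and equals $h_2^2(n-1)\overline{\sigma}$ by the definition of $\overline{\sigma}$, while the $\sigma_1^2$ floor (coming from $h_1=1$) is unaffected.

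The heart of the argument is evaluating $\bbE_{\bx^*}\big[\sum_{i=2}^n(\bv_i^T\overline{\bx^*})^2\big]$. First I would use that $\bv_i^T\overline{\bx^*}=\bv_i^T\bx^*$ for every $i\ge 2$, since $\overline{\bx^*}$ and each $\bv_i$ with $i\ge2$ are orthogonal to $\bone_n$; this reduces the centered signal to the raw signal on the high-frequency subspace. Then, for $i\ge 2$, $\bbE[(\bv_i^T\bx^*)^2]=\textnormal{Var}(\bv_i^T\bx^*)+(\bbE[\bv_i^T\bx^*])^2=\bv_i^T\diag(\bs)\bv_i+(\mu\,\bv_i^T\bone_n)^2$, and the mean contribution vanishes because $\bv_i^T\bone_n=0$ for $i\ge 2$. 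Summing over $i\ge 2$ and writing the result as a trace gives
\begin{align}
\sum_{i=2}^n \bv_i^T\diag(\bs)\bv_i
&= \trace\!\Big(\diag(\bs)\,\textstyle\sum_{i=2}^n \bv_i\bv_i^T\Big) \nonumber\\
&= \trace\!\big(\diag(\bs)(\bI-\bv_1\bv_1^T)\big). \nonumber
\end{align}

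Using $\bv_1=\bone_n/\sqrt{n}$ one obtains $\bv_1^T\diag(\bs)\bv_1=\frac1n\sum_{k=1}^n s_k^2$, so the trace equals $\big(1-\tfrac1n\big)\sum_{k=1}^n s_k^2=(n-1)\overline{s}$. Substituting $(n-1)\overline{s}$ and $(n-1)\overline{\sigma}$ for the two surviving sums yields the claimed inequality. The delicate point is the equality case: the only genuine inequality is MSE($\alpha$)$\le$MSE-UB($\alpha$), which by Corollary \ref{cor_MSE_bound} is tight precisely when $\cG$ is a complete graph with identical edge weights (so that $q_i=q_n$ and $h_i=h_2$ for all $i\ge2$), whereas the averaging step over $\bx^*$ is an exact identity; under the additional isotropy hypothesis $s_i=s$ the per-frequency signal powers $\bbE[(\bv_i^T\overline{\bx^*})^2]=s^2=\overline{s}$ are uniform across $i\ge2$, so $\overline{s}$ faithfully represents each modal contribution and the envelope is attained. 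I expect the main care to be needed in (i) confirming that the mean vector $\mu\bone_n$ drops out on the high-frequency subspace and (ii) verifying that the projection $\bI-\bv_1\bv_1^T$ removes exactly a $1/n$ fraction of $\sum_k s_k^2$, together with the precise characterization of when equality holds.
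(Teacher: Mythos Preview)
Your argument is correct and mirrors the paper's proof: both invoke the tower property to average the deterministic envelope of Corollary~\ref{cor_MSE_bound} over $\bx^*$, eliminate the mean $\mu\bone_n$ on the subspace orthogonal to $\bone$, and reduce the signal term to $\trace\!\big(\diag(\bs)(\bI-\bone\bone^T)\big)=(n-1)\overline{s}$. The only cosmetic difference is that the paper casts that last trace computation as an application of the Von Neumann trace inequality to $\trace(\diag(\bs)\,\bV\bV^T)$ (which, since $\bV\bV^T=\bI$, is in fact an identity), whereas you compute it directly; your discussion of the equality case---pointing out that the genuine inequality comes from Corollary~\ref{cor_MSE_bound} and hence from the graph spectrum rather than from the averaging step---is if anything more precise than the paper's.
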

\begin{proof}
	The proof is given in Appendix \ref{proof_random_signal}.
\end{proof}
Corollary \ref{cor_random_signal} shows that the mean $\mu \bone_n$ of random graph signals does not contribute to 
the upper bound on \textnormal{MSE($\alpha$)}, which suggests that GLR filters out the mean of the random graph signal in addition to the smoothing effect.
The results  in Theorem \ref{thm_match} and Corollary \ref{cor_SNR} can readily be applied to random graph signals via Corollary \ref{cor_random_signal}.
In particular, define $\theta=\sqrt{ \frac{\overline{\sigma}}{\overline{s}}}$, then the E-SNR becomes $ \frac{\overline{s}}{\overline{\sigma}}$, which is close to the average SNR  $\frac{\overline{\bs}}{\widetilde{\sigma}}$, where $\widetilde{\sigma}=\lb 1-\frac{1}{n}\rb\overline{\sigma}+\frac{\sigma_1^2}{n}$.
Moreover, applying the results in Corollary \ref{cor_SNR} gives the relation between $\theta$ and $\alpha^*$
for random graph signals.

\section{Performance Evaluation}
\label{sec_perform}
In this section we conduct experiments on synthetic graphs and real-world graph datasets to validate the developed bias-variance tradeoff analysis and the scaling behavior of the optimal regularization parameter $\alpha^*$ with respect to the parameter $\theta=\sqrt{\frac{1}{\textnormal{E-SNR}}}$. The graph signal $\bx^*$ is randomly drawn from a multivariate Gaussian distribution with $\mu=10$ and covariance  $\bs=\diag(\bone_n)$. The noise $\be$ is generated by a multivariate Gaussian distribution with zero mean and covariance $\bSigma=\diag(\bsigma)$, where $\bsigma=\sigma^2 \bone_n$. From Corollary \ref{cor_random_signal}, the E-SNR becomes $\frac{1}{\sigma^2}$ and hence $\theta=\sigma$. To investigate the scaling behavior of  $\alpha^*$ under different regimes of $\theta$, $\alpha^*$ 
is numerically obtained via grid search in the range $[0, b]$ with  $t$ uniform samples on the log-scale, where $b$ and $t$ are specified in each experiment. The results presented in this paper are averaged over 50 realizations.

We generate Erdos-Renyi random graphs with different node-pair connection probability $p$ to study the difference between $\textnormal{MSE}(\alpha)$ and $\textnormal{MSE-UB}(\alpha)$. Fig. \ref{Fig_ER_comp} shows the curves of per-node $\textnormal{MSE}(\alpha)$ and 
$\textnormal{MSE-UB}(\alpha)$ of three selected value $\alpha$ at different scales, where per-node $\textnormal{MSE}(\alpha)$ is the MSE divided by the number of nodes $n$. It is observed that the curves of $\textnormal{MSE}(\alpha)$ and 
$\textnormal{MSE-UB}(\alpha)$ have similar tendency with respect to $p$, and they collapse to the same value when $p=1$ (complete graphs), which justifies Corollary \ref{cor_MSE_bound}.

Fig. \ref{Fig_ER} displays the optimal regularization parameter  $\alpha^*$ obtained from minimizing MSE($\alpha$) and  MSE-UB($\alpha$) under different $\theta$, respectively, 
in Erdos-Renyi random graphs and in Watts-Strogatz small-world random graphs \cite{Watts98} with rewiring probability $q$ and average degree $d$. In the high E-SNR regime (small $\theta$), $\alpha^*$ is close to zero as proved in Corollary \ref{cor_SNR}. Furthermore, as one sweeps $\theta$, an abrupt boost in $\alpha^*$ followed by linear scaling with $\theta$ is observed, which is consistent with the analysis in Corollary \ref{cor_SNR}. Fixing $\theta$, we also observe that although $\alpha^*$ obtained from MSE($\alpha$) and $\textnormal{MSE-UB}(\alpha)$ are distinct, the corresponding curves of per-node MSE are nearly identical, especially in the large $\theta$ (low E-SNR) and small $\theta$  (high E-SNR) regimes, since $\textnormal{MSE-UB}(\alpha)$ is a tight upper envelope function of MSE($\alpha$) as stated in Corollary \ref{cor_MSE_bound}.

\begin{figure}[!t]
	\centering
	\includegraphics[width=2.6in]{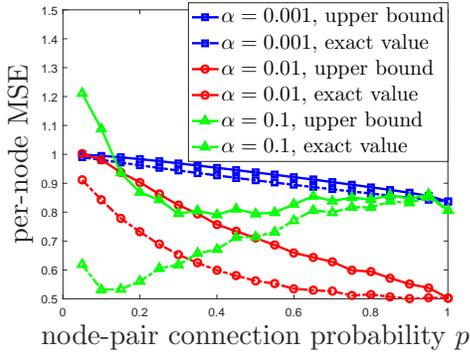}
			\vspace{-2mm}		
	\caption{Per-node $\textnormal{MSE}(\alpha)$ and $\textnormal{MSE-UB}(\alpha)$ in Erdos-Renyi random graphs with $n=100$ nodes. The curves of $\textnormal{MSE}(\alpha)$ and 
		$\textnormal{MSE-UB}(\alpha)$ collapse to the same value when $p=1$ (a complete graph), which justifies Corollary \ref{cor_MSE_bound}.}
	\label{Fig_ER_comp}
	\vspace{-4mm}
\end{figure}

	\begin{figure}[t]
		\centering
		\begin{subfigure}[b]{0.5\linewidth}
			\includegraphics[width=\textwidth]{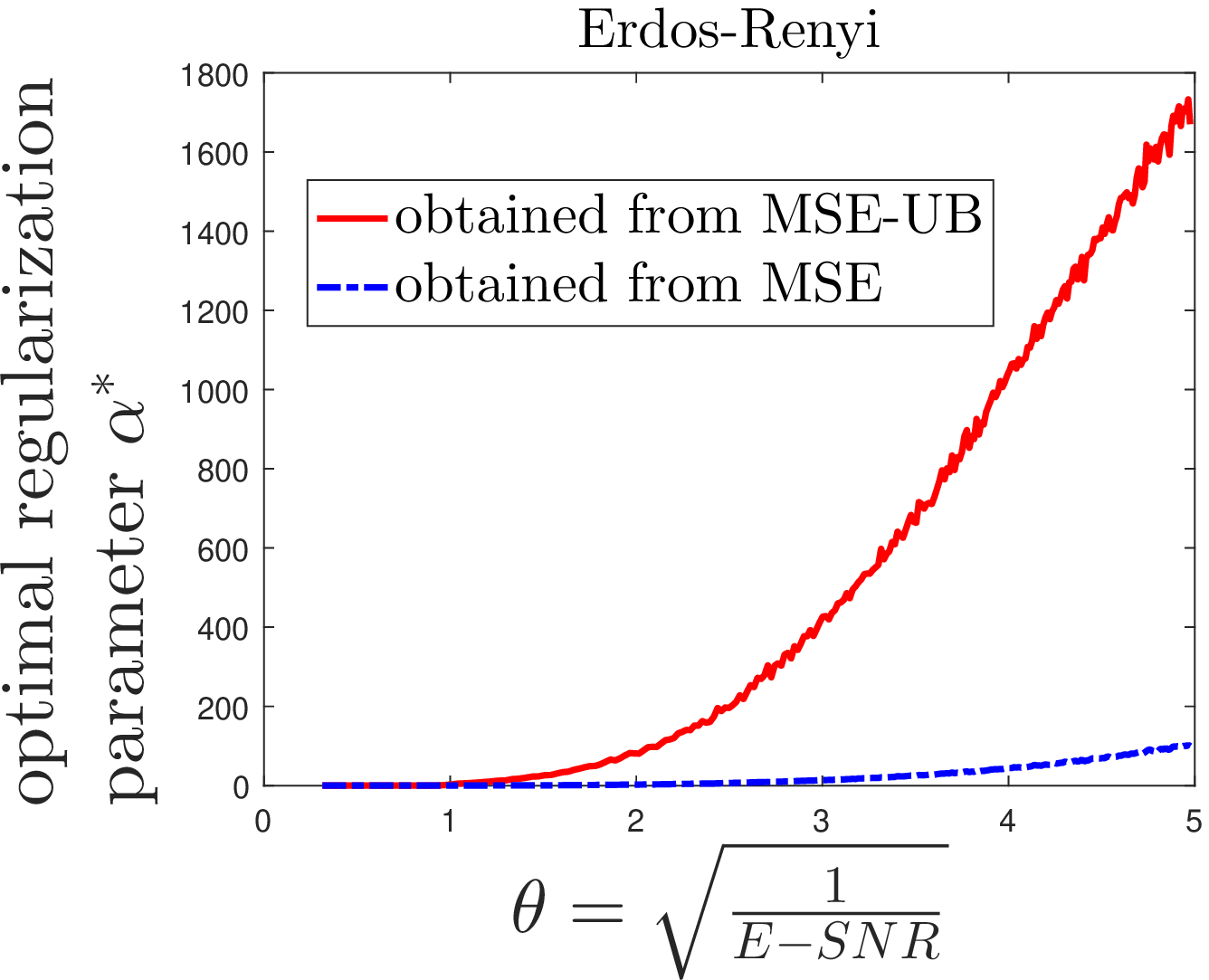}
		\end{subfigure}%
		\centering
		\begin{subfigure}[b]{0.5\linewidth}
			\includegraphics[width=\textwidth]{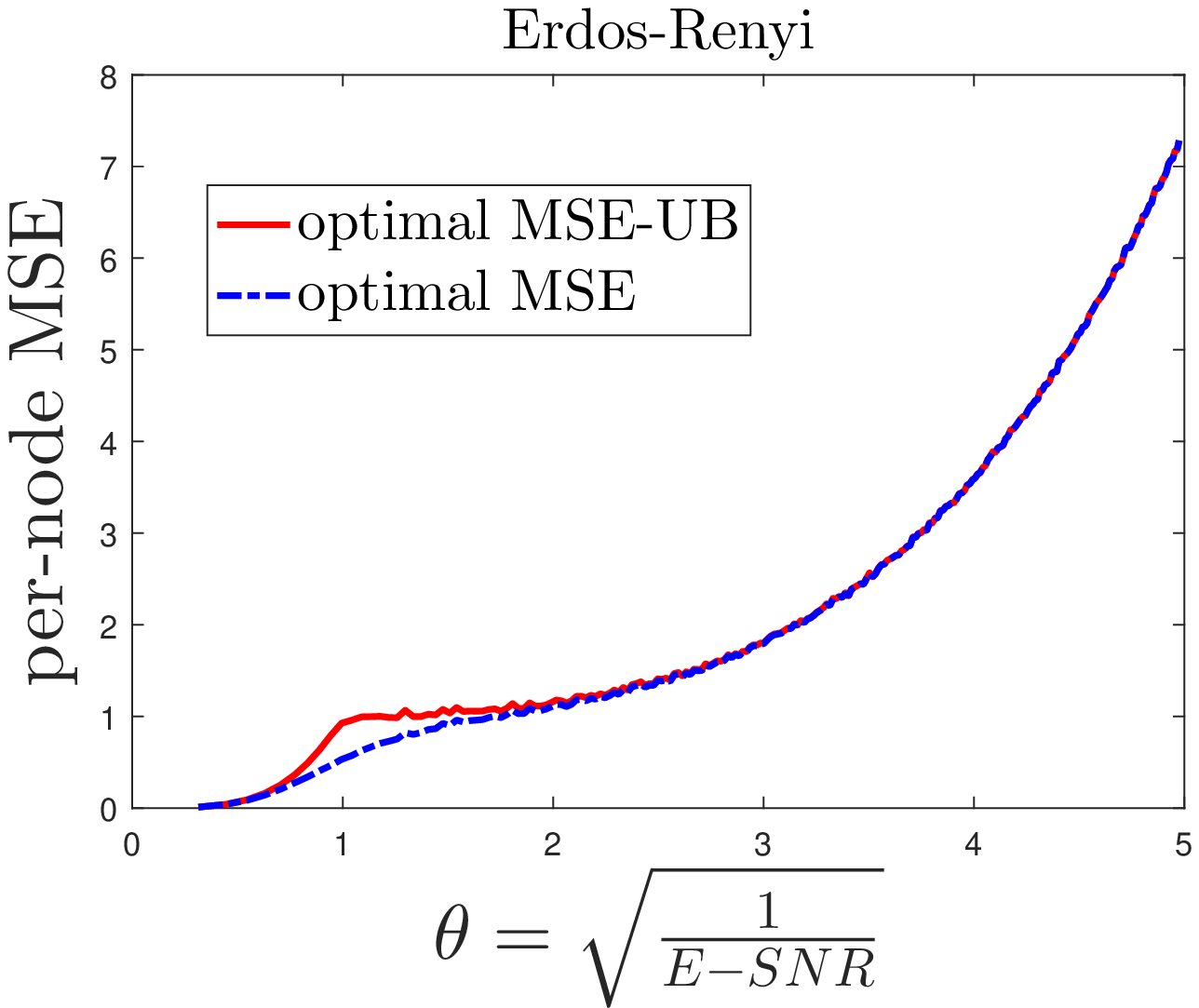}
		\end{subfigure}
		\\
		\begin{subfigure}[b]{0.5\linewidth}
			\includegraphics[width=\textwidth]{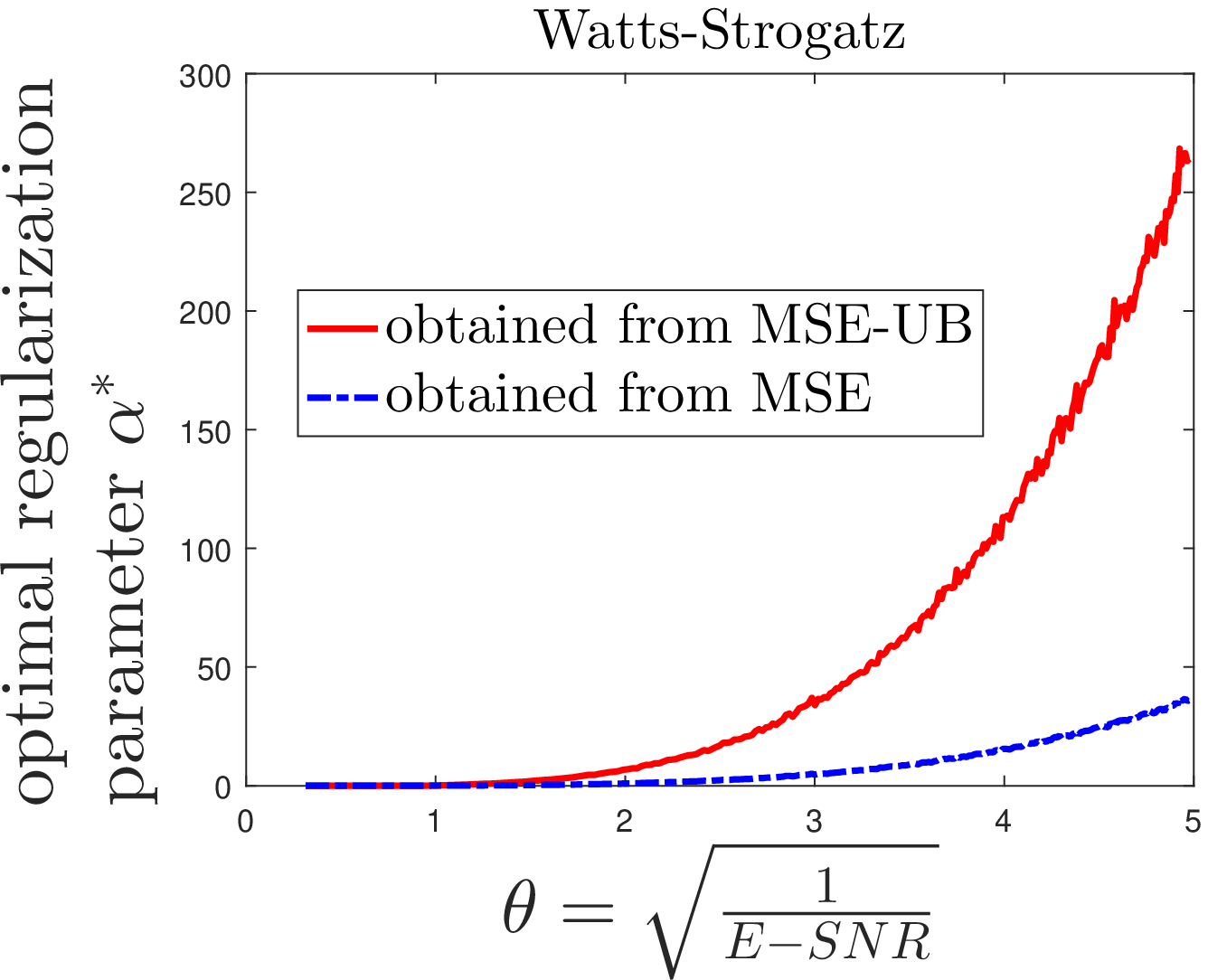}
		\end{subfigure}%
		\centering
		\begin{subfigure}[b]{0.5\linewidth}
			\includegraphics[width=\textwidth]{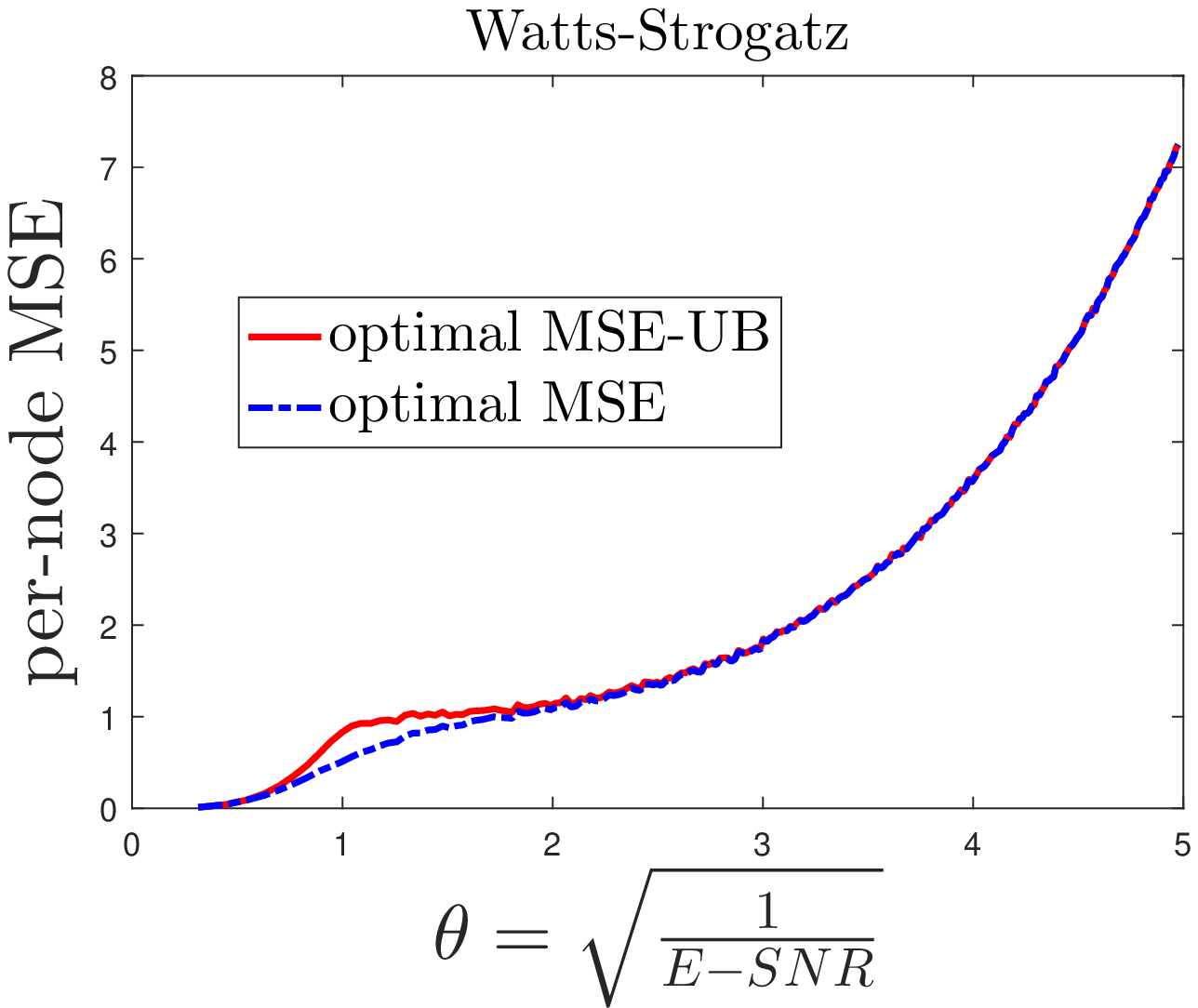}
		\end{subfigure}
				\vspace{-6mm}				
		\caption{Optimal regularization parameter $\alpha^*$ and the corresponding per-node MSE under different $\theta$ in Erdos-Renyi random graphs with $p=0.1$ and in Watts-Strogatz random graphs with $q=0.4$ and $d=20$. $n=100$, $b=2000$ and $t=10^4$. The scaling behavior of $\alpha^*$ validates Corollary \ref{cor_SNR}, and the corresponding per-node MSE  curves are nearly identical. }
		\label{Fig_ER}
			\vspace{-4mm}
	\end{figure}

	\begin{figure}[t]
		\centering
		\begin{subfigure}[b]{0.5\linewidth}
			\includegraphics[width=\textwidth]{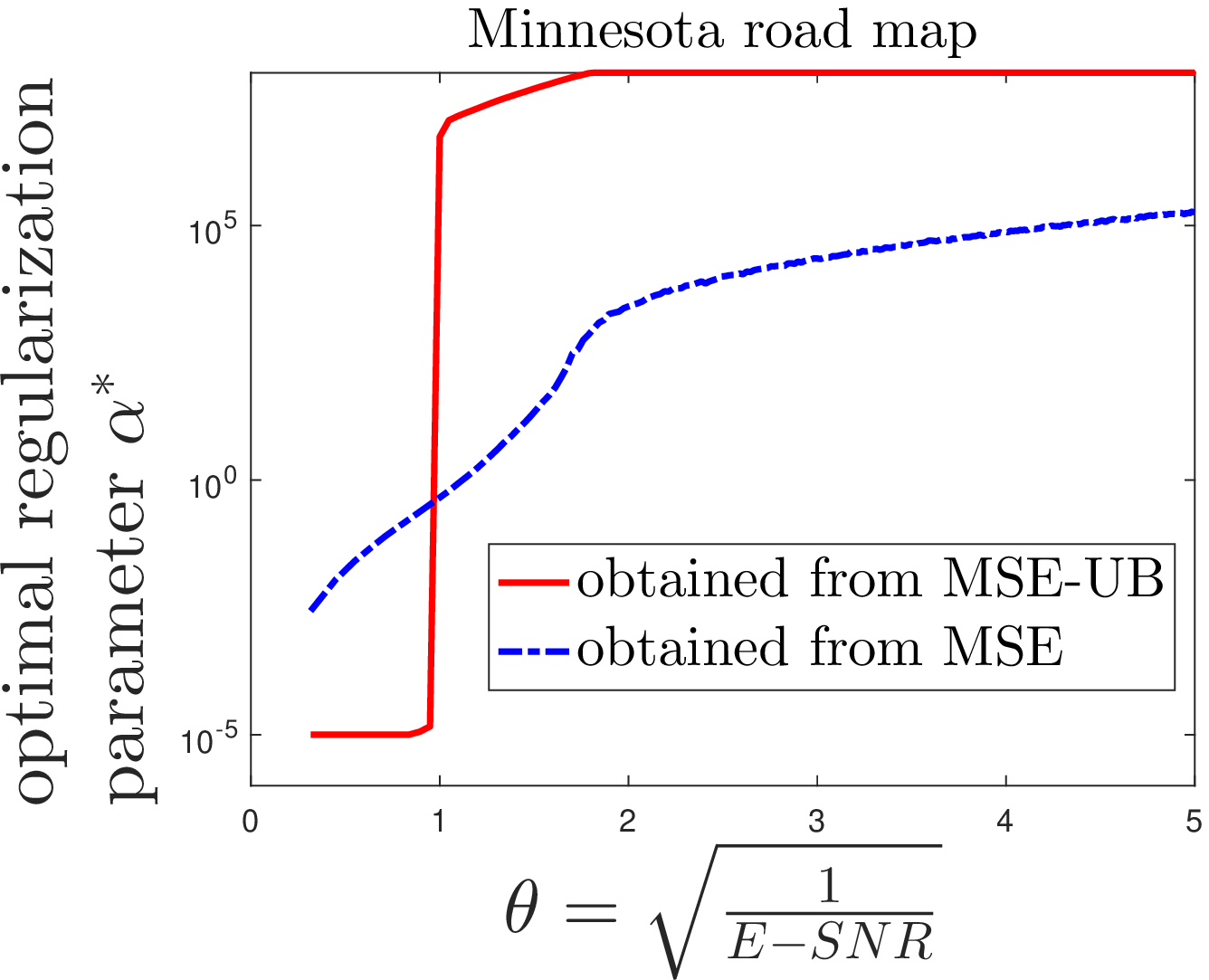}
		\end{subfigure}%
		\centering
		\begin{subfigure}[b]{0.5\linewidth}
			\includegraphics[width=\textwidth]{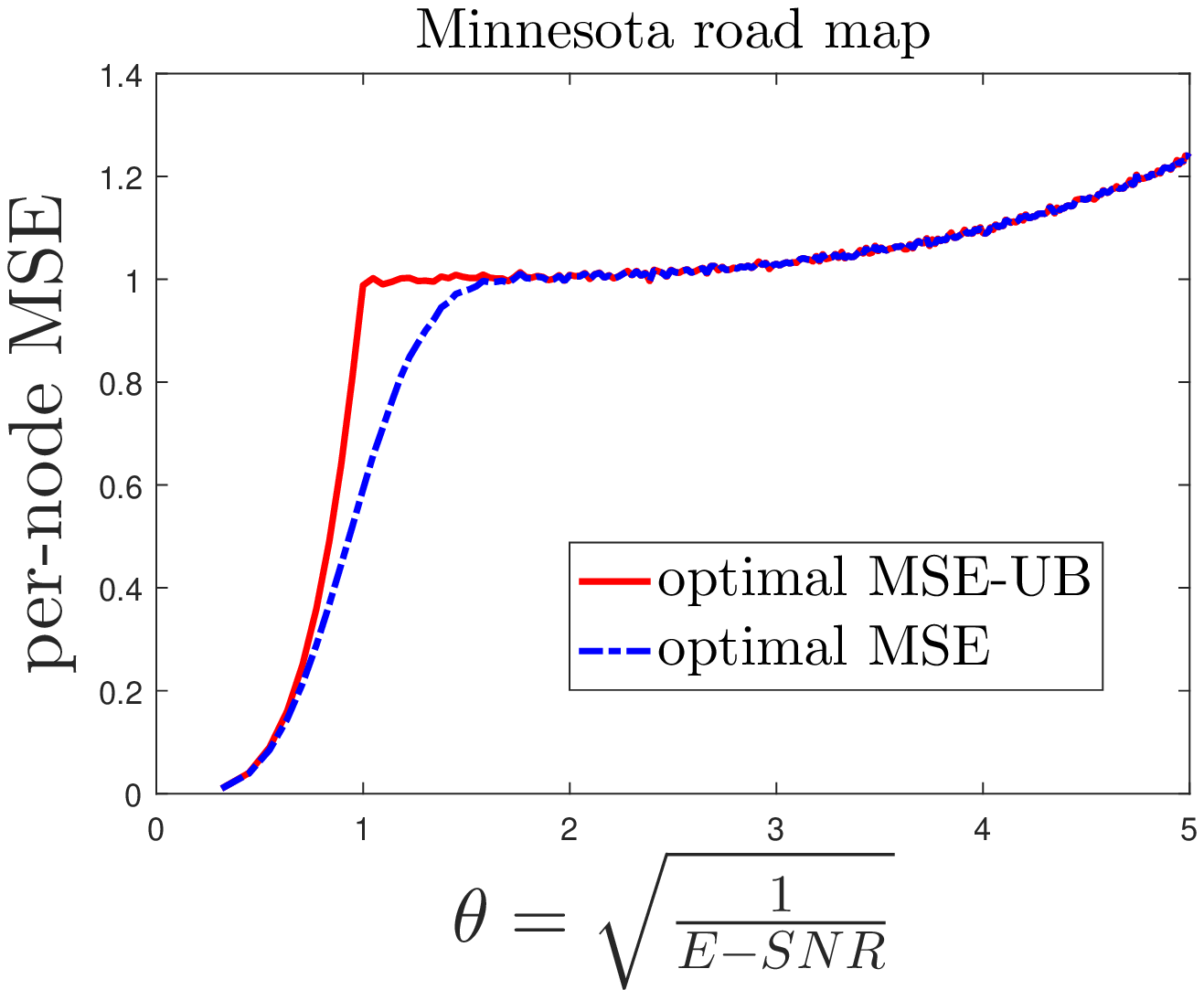}
		\end{subfigure}
		\\
		\begin{subfigure}[b]{0.5\linewidth}
			\includegraphics[width=\textwidth]{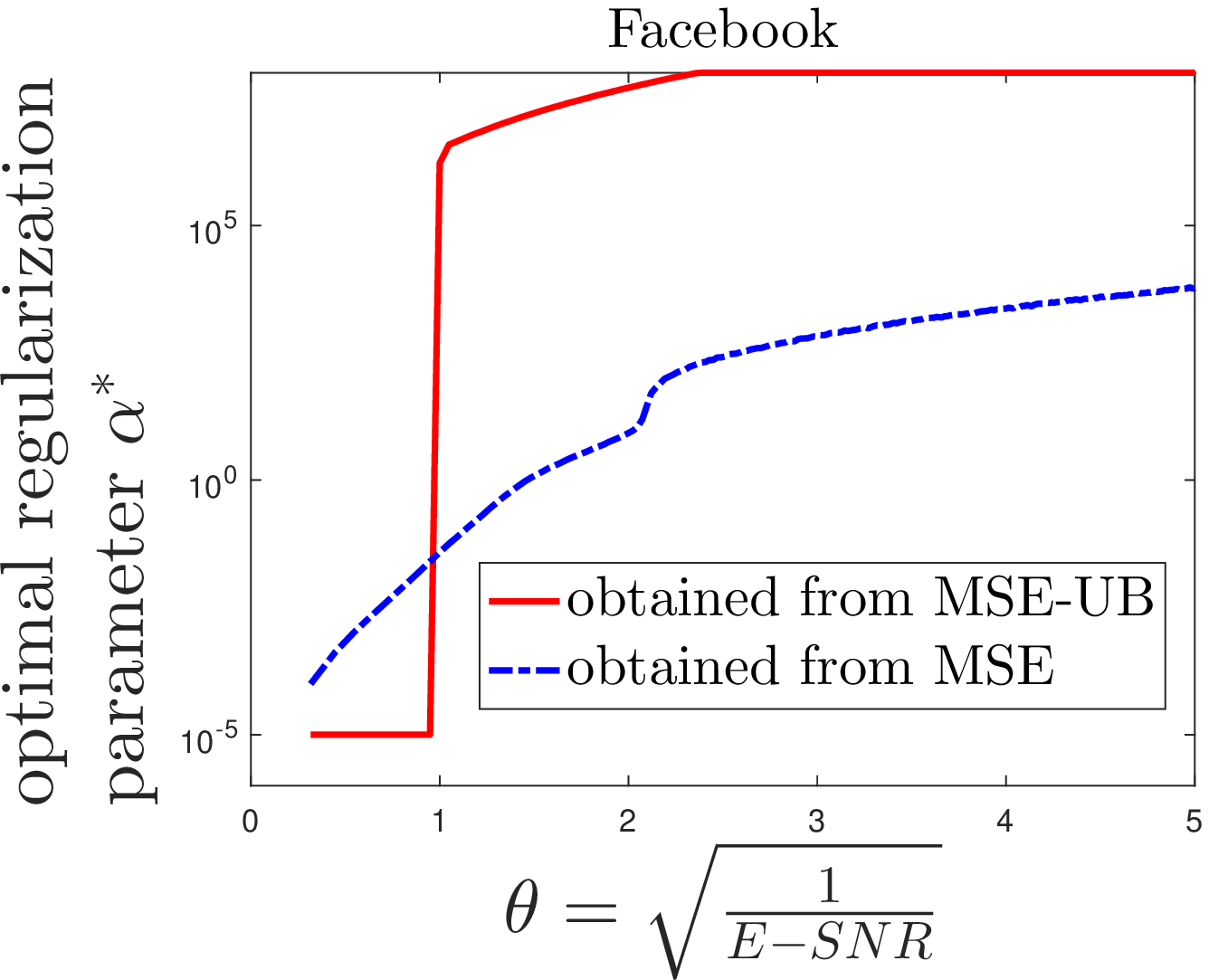}
		\end{subfigure}%
		\centering
		\begin{subfigure}[b]{0.5\linewidth}
			\includegraphics[width=\textwidth]{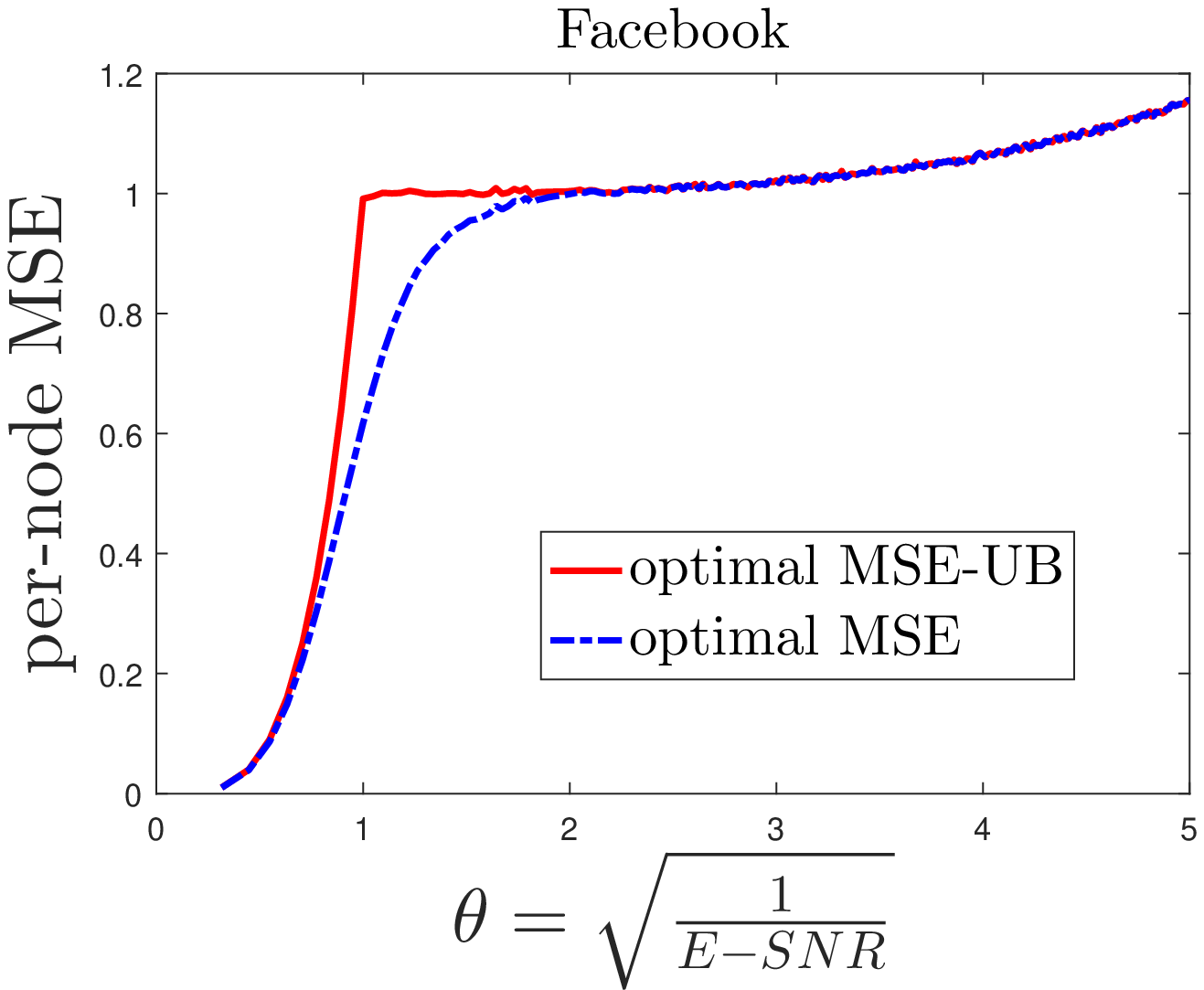}
		\end{subfigure}	
		\\
		\begin{subfigure}[b]{0.5\linewidth}
			\includegraphics[width=\textwidth]{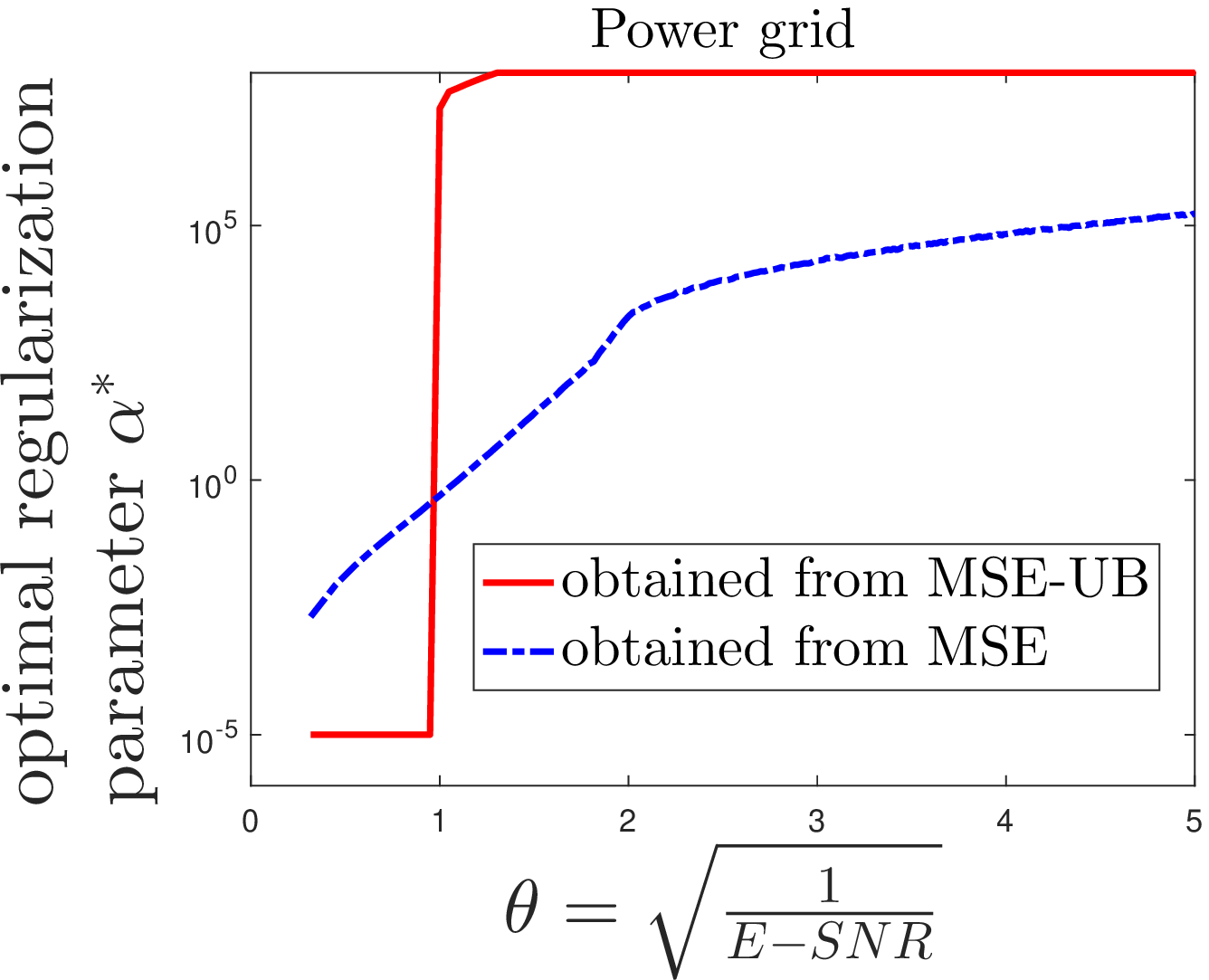}
		\end{subfigure}%
		\centering
		\begin{subfigure}[b]{0.5\linewidth}
			\includegraphics[width=\textwidth]{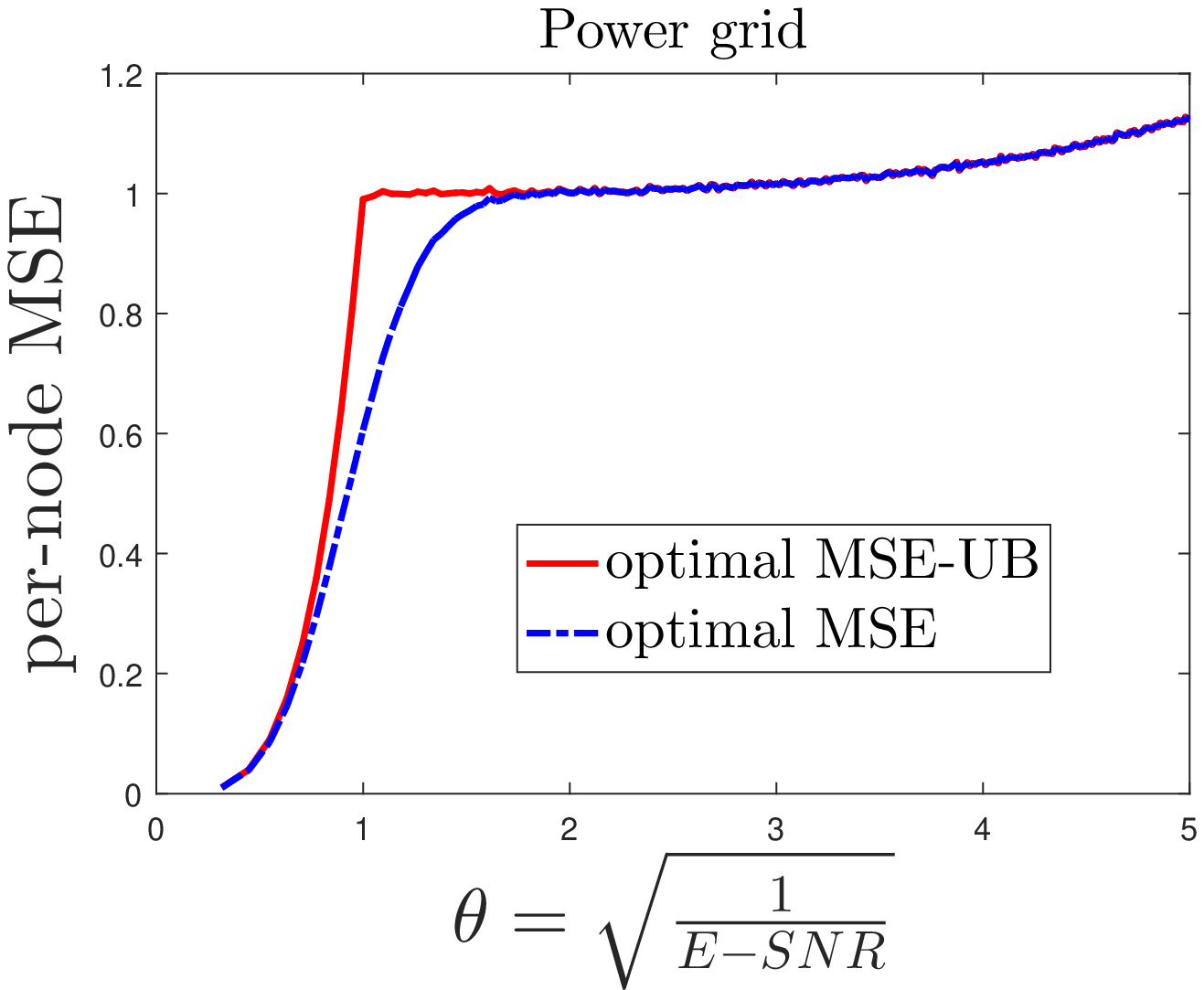}
		\end{subfigure}		
		\vspace{-6mm}		
		\caption{Optimal regularization parameter $\alpha^*$ (in log-scale) and the corresponding per-node MSE under different $\theta$ in three real-world graphs with $b=10^8$ and $t=10^3$. The saturation effect in $\alpha^*$ is due to the upper bound $b$ for grid search. Consistent with the analysis in  Corollary \ref{cor_SNR}, the results suggest that in most cases selecting a mediocre regularization parameter $\alpha$ is often suboptimal for minimizing the MSE. }
		\label{Fig_data}
		\vspace{-2mm}
	\end{figure}

Fig. \ref{Fig_data} displays the experimental results in three real-world graph datasets, including the Minnesota road map of 2640 nodes and 3302 edges {\cite{MATLAB_BGL}, the Facebook friendship graph of 4039 nodes and 88234 edges \cite{mcauley2012learning}, and the U.S. western power grid network of  4941 nodes and 6594 edges \cite{Watts98}. Consistent with the experimental results in synthetic graphs, similar scaling effect of $\alpha^*$ and near-optimal performance on per-node MSE are observed in real-world graph datasets. More importantly, as indicated in Corollary \ref{cor_SNR}, these experimental results suggest that  in most cases (i.e., different $\theta$) selecting a mediocre regularization parameter $\alpha$ is  often suboptimal for minimizing the MSE. Instead, assigning a large (small) regularization parameter in the large (small) $\theta$ regime is more effective in minimizing the MSE.

\section{Conclusion}
\label{sec_conclusion}
The contributions of this paper are twofold. First, we study the bias-variance tradeoff of graph Laplacian regularizer (GLR) and specify the scaling law of the optimal regularization parameter. We show that an abrupt boost in the optimal regularization parameter is expected when one sweeps a novel signal-to-noise ratio (SNR) parameter $\theta$, which suggests that selecting a mediocre regularization parameter is often suboptimal for minimizing the mean squared error.  Second, we apply the developed analysis to random, band-limited, and multiple-sampled graph signals and specify the corresponding SNR parameter $\theta$. Experimental results on synthetic and real-world graphs validate our analysis on the scaling effect of optimal regularization parameter, and demonstrate near-optimal performance in mean squared error, which provides new insights on signal processing and machine learning methods involving GLR. Future work includes extending the current framework to multi-stage bias-variance tradeoff with GLR.

\clearpage
\bibliographystyle{IEEEtran}
\bibliography{IEEEabrv20160824,CPY_ref_20170421}

\clearpage
\setcounter{equation}{0}
\setcounter{figure}{0}
\setcounter{table}{0}
\setcounter{page}{1}
\makeatletter
\renewcommand{\theequation}{S\arabic{equation}}
\renewcommand{\thefigure}{S\arabic{figure}}
\section*{{\LARGE Supplementary Material}}
\appendices
\section{}
\label{proof_var}
When $\alpha=0$, $\cov(\bxhat)=\bSigma$.
To show that \textnormal{Var($\alpha$)} $\leq$ \textnormal{Var($0$)} for any $\alpha>0$, from (\ref{eqn_variance}) it suffices to show that  $\trace(\bSigma)-\trace(\bH^2 \bSigma)=\trace[(\bI-\bH^2)\bSigma] \geq 0$  for any $\alpha>0$. For any $\alpha>0$, observe  from (\ref{eqn_H}) that the eigenvalue decomposition of $\bI-\bH^2$ is $\bI-\bH^2=\sum_{i=2}^n \Lb 1- \frac{1}{\lb 1+ \alpha \lambda_i \rb^2 } \Rb \bv_i \bv_i^T$, which means  $\bI-\bH^2$ is positive definite (PD) since $1+\alpha \lambda_i > 1$ for all $2 \leq i \leq n$. Finally, since $\Sigma$ is a covariance matrix and hence PSD, the term
 $\trace[(\bI-\bH^2)\bSigma] \geq 0$  ($\trace[(\bI-\bH^2)\bSigma]>0$ if $\Sigma$ has full rank) \cite{HornMatrixAnalysis}, which completes the proof. 

\section{}
\label{proof_MSE}
From (\ref{eqn_bias}), the squared bias is \textnormal{Bias($\alpha$)}$^2=\sum_{i=2}^n q_i^2 (\bv_i^T \bx^*)^2$. Recall that $\{\bv_i\}_{i=2}^n$ are eigenvectors of the graph Laplacian matrix $\bL$ such that $\bv_i^T  \bone_n=0$ for all $2 \leq i \leq n$. We have $(\bv_i^T \bx^*)^2=\Lb \bv_i^T (\bx^*- a\bone_n )\Rb^2$ for any $a \in \bbR$.
If $\Sigma=\diag(\bsigma)$, then the variance in (\ref{eqn_variance}) reduces to $\sum_{i=1}^n h_i^2 \sigma_i^2$. Finally, using (\ref{eqn_MSE}) and setting $a=\frac{\bone_n^T\bx^*}{n}$ give the results.

\section{}
\label{proof_MSE_bound}
Since $q_i=\frac{1}{1+\frac{1}{\alpha \lambda_i}} \leq \frac{1}{1+\frac{1}{\alpha \lambda_n}}$ and $h_i=\frac{1}{1+ \alpha \lambda_i} \leq \frac{1}{1+ \alpha \lambda_2} $, for all $2 \leq i \leq n$,    applying these results to Theorem \ref{thm_MSE}, we obtain the upper bound 	\textnormal{MSE-UB($\alpha$)} on 	\textnormal{MSE($\alpha$)}. If the graph $\cG$ is a complete graph of identical edge weight $w>0$, then $\lambda_i= w \cdot n $ for all $2 \leq i \leq n$. Therefore, the resulting \textnormal{MSE($\alpha$)} is identical to 	\textnormal{MSE-UB($\alpha$)}.

\section{}
\label{proof_match}
If the first two terms in the RHS of Corollary \ref{cor_MSE_bound} are order-matching, then there exists a constant $\beta > 0$ such that 
$\lb \frac{1}{1+\frac{1}{\alpha\lambda_n}} \rb^2= \beta^2 \cdot \lb \frac{1}{1+\alpha \lambda_2}\rb^2  \theta^2 $. Solving this equation gives 
\begin{align}
\alpha^* = \frac{( \beta \theta-1) \lambda_n +\sqrt{( \beta \theta-1)^2 \lambda_n^2+4 \lambda_n \lambda_2 \beta \theta}} {2 \lambda_n \lambda_2}. 
\end{align}

\section{}
\label{proof_SNR}
For the high E-SNR and low E-SNR regimes, the optimal order of $\alpha^*$ can be obtained by the Newton's generalized binomial expansion (binomial series expansion) that for any real $x$ such that $|x|<1$, $\sqrt{1+x}=1+\frac{x}{2}+O(x^2)$. Specifically, for the high E-SNR and low E-SNR regimes the term $\sqrt{( \beta \theta-1)^2 \lambda_n^2+4 \lambda_n \lambda_2 \beta \theta}$ in Theorem \ref{thm_match} can be approximated by $\sqrt{( \beta \theta-1)^2 \lambda_n^2+4 \lambda_n \lambda_2 \beta \theta} = | \beta \theta-1| \lambda_n \sqrt{ 1 +\frac{4 \lambda_n \lambda_2 \beta \theta}{( \beta \theta-1)^2 \lambda_n^2}} \approx | \beta \theta-1| \lambda_n \lb 1+ \frac{2 \lambda_n \lambda_2 \beta \theta}{( \beta \theta-1)^2 \lambda_n^2} \rb $. If $\beta \theta \gg 1$, then $ | \beta \theta-1| \lambda_n \lb 1+ \frac{2 \lambda_n \lambda_2 \beta \theta}{( \beta \theta-1)^2 \lambda_n^2} \rb \approx (\beta \theta -1) \lambda_n$, which implies $\alpha^*=O\lb \frac{\theta}{\lambda_2}\rb$. If $\beta \theta \ll 1$ (i.e., $(\beta \theta -1)^2 \approx 1$), then $ | \beta \theta-1| \lambda_n \lb 1+ \frac{2 \lambda_n \lambda_2 \beta \theta}{( \beta \theta-1)^2 \lambda_n^2} \rb \approx (1- \beta \theta) \lambda_n \lb 1+ \frac{2 \lambda_2 \beta \theta}{ \lambda_n} \rb$, which implies $\alpha^*=O\lb \frac{\theta}{\lambda_n}\rb$.
 For the moderate E-SNR regime,  the optimal order can be obtained from Theorem \ref{thm_match} using  the fact that $\beta \theta-1 \approx 0$. 

\section{}
\label{proof_multi_signal}
Let $\bxhat=\bH \by$ and let $\bxt=\bH \bybar$. It is easy to verity that $\bbE [\bxt]=\bbE [\bxhat]=\bH \bx^*$ and $\cov(\bxt)=\frac{\cov(\bxhat)}{T}$. Therefore, the bias of $\bxt$ is the same as in (\ref{eqn_bias}) and the variance of $\bxt$ is $\frac{\textnormal{Var$(\alpha)$}}{T}$, where Var$(\alpha)$ denotes the variance of $\bxhat$ as in (\ref{eqn_variance}). Finally, the results are obtained by following the same proof procedure as in Appendix \ref{proof_MSE_bound}.

\section{}	
\label{proof_band_signal}
Since $\bx^*=\sum_{j \in \cA} \omega_j\bv_j $,  by the orthogonality of eigenvectors, applying 
$\sum_{i=2}^n (\bv_i^T \bx^*)^2= \sum_{j \in \cA / \{1\}} \omega_j^2$ to Corollary \ref{cor_MSE_bound} completes the proof.

\section{}
Since $\bx^* \sim \cN(\mu \bone_n,\diag(\bs))$,  using the smoothing property of conditional expectation and following the same proof procedure as in Appendix \ref{proof_MSE} by setting $a=\mu$, and using Corollary \ref{cor_MSE_bound} gives 
\label{proof_random_signal}
\begin{align}
\label{eqn_pf1}
\textnormal{MSE-UB($\alpha$)}	
&= \lb \frac{1}{1+\frac{1}{\alpha\lambda_n}} \rb^2  \trace \lb \diag(\bs) (\bV \bV^T-\bone \bone^T) \rb  \nonumber \\
&~~~+ \lb \frac{1}{1+\alpha \lambda_2}\rb^2 (n-1) \overline{\sigma}+\sigma_1^2,
\end{align} 
where $\bV=[\bv_1~\bv_2~\cdots~\bv_n]$.
Applying the Von Neumann's trace inequality \cite{HornMatrixAnalysis},	we have 
\begin{align}
\label{eqn_pf2}
\trace \lb \diag(\bs) \bV \bV^T \rb	\leq \sum_{i=1}^n s_i^2=n \overline{s}, 
\end{align}
where the equality holds if $s_i=s \geq 0$ for all $1 \leq i \leq n$.
Finally, since 	 $\trace \lb \diag(\bs) \bone \bone^T\rb = \overline{s}$, applying  (\ref{eqn_pf2}) to  (\ref{eqn_pf1}) completes the proof.

\end{document}